\newtheorem{theorem}{Theorem}[section]
\newtheorem{corollary}[theorem]{Corollary}
\newtheorem{lemma}[theorem]{Lemma}
\newenvironment{proof}[1][Proof]{\textbf{#1.} }{\ \rule{0.5em}{0.5em}}
\title{Fatigue-Aware Bandits for Dependent Click Models }
\author{ \Large \textbf{Junyu Cao,\textsuperscript{\rm 1}\thanks{Correspondence to: Junyu Cao (jycao@berkeley.edu)} Wei Sun, \textsuperscript{\rm 2}, Zuo-Jun (Max) Shen, \textsuperscript{\rm 1} Markus Ettl\textsuperscript{\rm 2}}\\ 
\textsuperscript{\rm 1} University of California, Berkeley, California 94720 \\
\textsuperscript{\rm 2} IBM Research, Yorktown Height, New York 10591
}
\begin{document}

\maketitle

\begin{abstract}
As recommender systems send a massive amount of content to keep users engaged, users may experience fatigue which is contributed by 
1) an overexposure to irrelevant content, 2) boredom from seeing too many
similar recommendations. To address this problem, we consider an online learning setting where a platform 
learns a policy to recommend content that takes user fatigue into account. We propose an extension of the Dependent Click Model (DCM) to describe users' behavior. We stipulate that  for each piece of content, 
its attractiveness to a user depends on its intrinsic  relevance and a discount factor which measures how many similar contents have been  shown. Users view the recommended content 
sequentially and click on the ones that they find attractive. Users may leave the platform 
at any time, and the probability of exiting  is higher when they do not like the content. Based on user's feedback, the platform  
learns  the relevance of the underlying content as well as the discounting effect due to  content fatigue. We refer to this learning task as “fatigue-aware DCM Bandit” problem. We consider two learning scenarios depending on whether the discounting effect is known. For each scenario, we propose a learning algorithm which simultaneously explores and exploits, and characterize its regret bound. 
\end{abstract}

\section{Introduction}
Recommender systems increasingly influence how users discover content. Some well-known examples include Facebook’s News Feed, movie recommendations on Netflix, Spotify’s Discover Weekly playlists, etc. To compete for users' attention and time,  platforms push out a vast amount of content when users browse their websites or mobile apps. While a user sifts through the proliferation of content, her experience could be adversely influenced by: 1) \emph{marketing fatigue} which occurs due to an overexposure to irrelevant content, and 2) \emph{content
fatigue} which refers to the boredom from seeing too many similar recommendations.

Motivated by this phenomenon, we consider an online learning setting with a ``fatigue-aware'' platform, i.e., it 
learns a policy  to select a sequence of recommendations for a user, while being conscious that both the choices
of content and their placement could influence the experience. We propose a variant of the Dependent Click Model (DCM), where a user can click on multiple items\footnote{We use ``content'' and ``items'' interchangeably in this work.} that she finds attractive. 
For each piece of content, its attractiveness depends on its intrinsic relevance and a discount factor that
measures how many similar recommendations have already  been  shown to this user. The user may leave the platform
at any time, and the probability of exiting is higher when she does not like the content, reflecting
the presence of marketing fatigue. Meanwhile, showing too much relevant but similar content could
also reduce its attractiveness due to content fatigue and drive bored users to abandon the platform.
The platform whose objective is to maximize the total number of clicks needs
to learn users’ latent preferences in order to determine the optimal sequence of recommendations, based on users' feedback. We
refer to this online learning task which the platform faces as \emph{fatigue-aware DCM bandits}.


Our contribution of our work is fourfold. Firstly, we propose a novel model which captures the effects of marketing
fatigue and content fatigue on users’ behavior. In the original DCM model \cite{guo2009efficient,katariya2016dcm}, users can only
exit the platform upon clicking on a recommendation. 
In reality, 
users may leave at any time, and they are more likely to exit when they are not engaged with the recommendations. 
We extend the DCM model to capture marketing fatigue 
by incorporating exiting behavior after both non-clicks and clicks. To reflect content fatigue, we incorporate a discount factor into DCM such that it penalizes repetitive content of similar types and promotes diversity in its recommendations. 
Secondly, even in the offline setting where all the information is
known, the optimization problem to select a sequence of recommendation which is central to the learning task is combinatorial in nature without a straightforward efficient
algorithm. We propose a polynomial-time algorithm for this problem. Thirdly, for the online setting,
we first consider a scenario where the discount factor is known (e.g., when it can be estimated from historical data). We propose a learning algorithm
and quantify the regret bound. Lastly, we consider a more general scenario where both the discount
factor and intrinsic relevance of items need to be learned. This is a significantly more challenging 
setting to analyze because 1) we only observe partial feedback on item’s attractiveness which depends on both
the discount factor and the relevance; 2) the discount factor is dependent on the content
that we recommend. For this scenario, we also establish a regret bound for the algorithm that we propose.

Our paper is organized as follows. In Section~\ref{S.literature}, we review related work. In Section~\ref{S.formulation}, we introduce the fatigue-aware DCM model and provide an offline algorithm to find the optimal sequence when all parameters are known. In Section~\ref{S.online} and \ref{S.discount}, we introduce our learning problems for two scenarios depending on whether the discount factor is known and analyze the regret associated with the proposed algorithms. In Section~\ref{S.numerical}, we perform several numerical experiments to evaluate the performance of our algorithms.

\section{Related Literature}\label{S.literature}

Cascade Model \cite{chuklin2015click,craswell2008experimental} is a popular model that describes user's single-click behavior. Several variants of this model have been considered in the bandit literature \cite{combes2015learning,kveton2015cascading,kveton2015combinatorial}. One of its limitations  is that the positions of the items do not influence the reward since a user is expected to browse the list of recommendations until she clicks on the content that she likes. DCM \cite{guo2009efficient} generalizes Cascade Model to allow multiple clicks by incorporating a parameter to indicate the probability that a user resumes browsing after clicking on a recommendation. On the other hand, if a user does not click, she views the next item with certainty unless the sequence runs out. 
\cite{katariya2016dcm} analyzed one type of DCM bandit settings. However, 
the reward in  \cite{katariya2016dcm} 
does not exactly correspond to the number of clicks.  
In our DCM bandit setting, the rewards exactly correspond to 
the number of users' clicks, which is an ubiqutious measure on recommenders' effectiveness. In addition, as our model allows users to exit at any time, the position of the items  matters as users may exit the platform early if they do not like what has been shown initially.

Alternative multi-click models include the position-based model (PBM) \cite{richardson2007predicting}. In PBM, the click probability of an item is a product of a static position-based examination probability and the item's intrinsic relevance score.  \cite{komiyama2017position,lagree2016multiple} have investigated PBM bandits. In \cite{lagree2016multiple}, the position-based examination probability is assumed known, whereas  this quantity has to be learned together with item attractiveness in \cite{komiyama2017position}. 
In particular, \cite{komiyama2017position} derive a parameter-dependent regret bound, which assumes that the gap between parameters is strictly positive. 
While our model also includes a position-dependent parameter which discounts the attractiveness of an item, this quantity does not merely depend on the position. The discount factor depends on the content of the recommendations, more specifically, the order and the content types. For our setting, we derive a parameter-independent regret bound.

Our work also bears some resemblance to the stochastic rank-1 bandits \cite{katariya2016stochastic,katariya2017bernoulli}, where 
a row arm and a column arm are pulled simultaneously in each around, and the reward  corresponds to the product of the two values. 
In our setting, item attractiveness is a product of the discount factor and item's intrinsic relevance.  However, in rank-1 bandits, one is only interested in determining the particular pair that yields the highest reward, whereas we need to learn and rank all items, which is significantly harder.

An important topic of recommender systems is fatigue control \cite{kapoor2015just,ma2016user}. One way to manage content fatigue is to introduce diversity in recommendations \cite{radlinski2008learning,ziegler2005improving}. The linear submodular function has been used to tackle the diversification problem in bandit setting \cite{yu2016linear,yue2011linear}.
\cite{warlop2018fighting} propose a reinforcement learning framework that uses a linear reward structure which captures 
the effect of the recent recommendations on user’s preferences. 
However, instead of recommending specific items as in our work,  \cite{warlop2018fighting} restrict to recommending genres or content types to a user. While \cite{cao2019dynamic,wang2019thompson} have also studied marketing fatigue in an online setting, their setting only captures a single click, whereas our model allows multiple clicks and incorporates the effect of content fatigue in addition to marketing fatigue.

\section{Problem Formulation}\label{S.formulation}
In this section, we formally introduce the fatigue-aware DCM model and present an algorithm to determine the optimal sequence of recommendations in an offline setting where all underlying parameters are known.

\subsection{Setting}
Suppose there are $N$ available items  (e.g., songs, videos, articles) for the platform to choose from, denoted as $[N]$. Each item belongs to one of the $K$ types (e.g., genres, categories, topics). Let ${\bf S}=(S_1,\cdots, S_m)$ denote a sequence of recommended items. For each item $i$, $u_i$ denotes its intrinsic relevance to a user, and $C(i)=j$ if item $i$ belongs to type $j$. We define $h_i({\bf S})$ as the number of items with type $C(i)$ shown  before item $i$. Similar to \cite{warlop2018fighting}, we model the impact of showing too many similar contents as a discount factor  on items' attractiveness. 
More precisely, define the attractiveness of item $i$ as $z_i({\bf S}):=f(h_i({\bf S}))u_i$,  where $f(\cdot)$ represents the discount factor on users' preferences due to content fatigue, which depends on how many items of the same type have been shown. $f(\cdot)$ is assumed to be a decreasing function. Without loss of generality, we assume $f(0)=1$.

Given a list of recommendations ${\bf S}=(S_1,\cdots, S_m)$, a user examines it sequentially, starting from $S_1$. 
The user clicks on item $i$ when she finds it attractive. The platform receives a  reward for every click. DCM models the multi-click behavior by incorporating a parameter $g$ which is the probability that the user will see more recommendations after a click. 
In case of no click or skip (unlike in the original DCM \cite{guo2009efficient,katariya2016dcm} where a user examines the next item with probability 1), we use $q$ to denote the probability that the user will resume browsing. As users are more likely to stay on the platform and continue browsing after consuming a piece of good content\footnote{We provide some evidence of  this phenomenon based on the estimates from a real dataset, with  more details  in Section \ref{S.numerical}.}, we let $q\leq g$ 
. 
The interaction between a user and recommended content is illustrated in Fig~\ref{fig:flowchart}.

\begin{figure}[h!]
\centering
  \includegraphics[width=\linewidth]{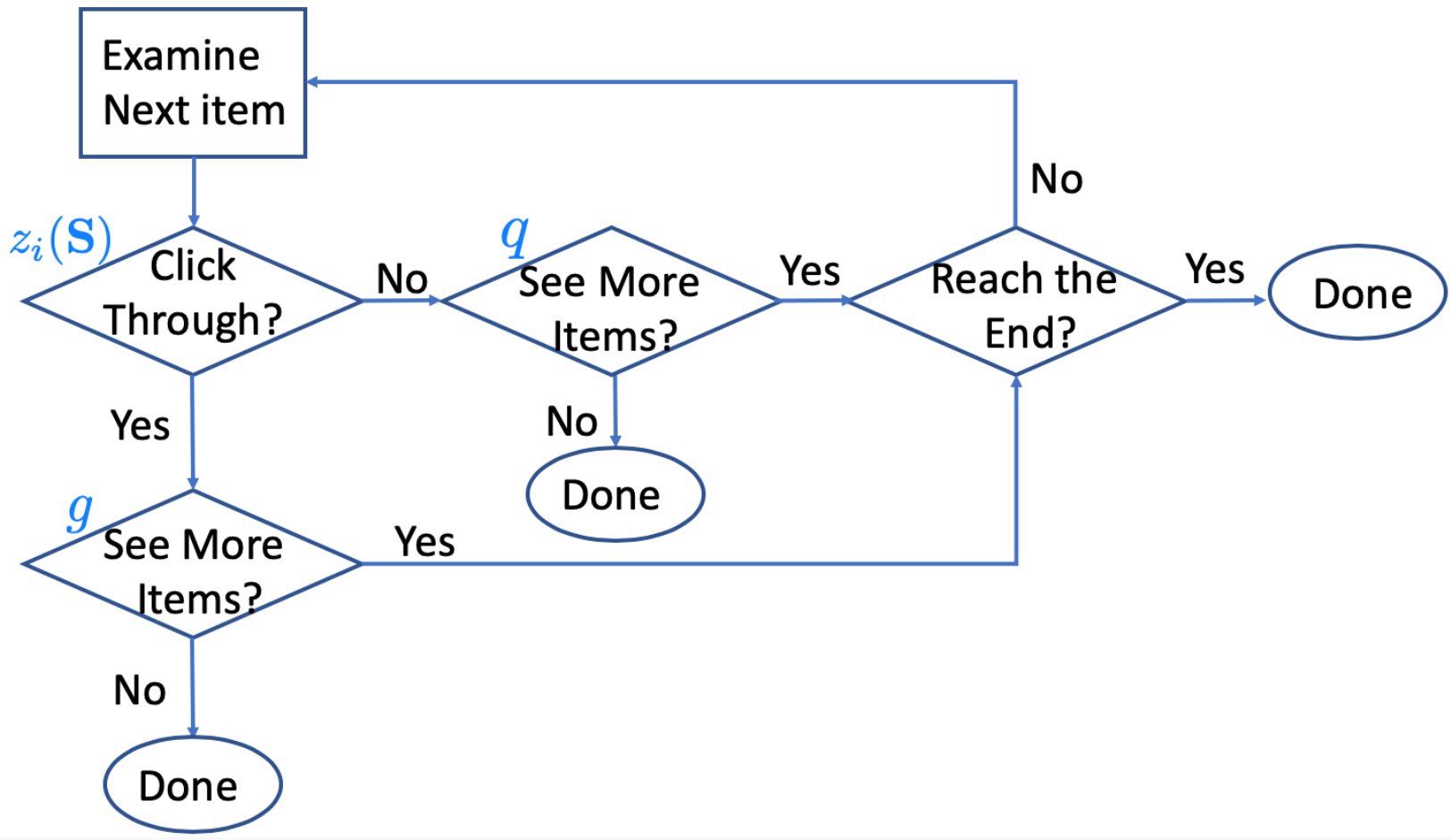}
  \caption{User behavior in a fatigue-aware DCM.}
  \label{fig:flowchart}
\end{figure}


The probability of clicking on item $i$ from a sequence $\bold{S}$, denoted as $\mathbb{P}_i(\bold{S})$, depends on its position in the list, its own content, as well as the content of other items shown previously. Formally,
\begin{align*}
 &\mathbb{P}_{i}(\bold{S})=\\
&\left\{
\begin{aligned}
&u_i,& \text{ if } i\in S_1\\
&\prod_{k=1}^{l-1}\left(gz_{I(k)}({\bf S})+q(1-z_{I(k)}({\bf S}))\right)z_{i}({\bf S})& \text{ if } i\in S_l
\end{aligned}
\right.\notag
\end{align*} 
 where $I(\cdot)$ denotes the index function, i.e., $I(k)=i$ if and only if $S_k=\{i\}$. When $i$ is the first item of the sequence, the probability of clicking is simply $u_i$, which is its intrinsic relevance.  For the remainder of the sequence, the probability of clicking on $i$ as the $l^{th}$ item is 
the joint probability that 1) the user finds item $i$ attractive after taking content fatigue into account, $z_{i}({\bf S})$; 2) she remains on the platform after examining the first $l-1$ items, $\prod_{k=1}^{l-1}\left(gz_{I(k)}({\bf S})+(1-z_{I(k)}({\bf S}))q\right)$, which  accounts  for both clicking and skipping behavior. 


\subsection{Platform's optimization problem}
The platform's objective is to maximize the total number of clicks by optimizing the sequence of recommended items. We use $R(\bold{S})$ to denote the platform's expected reward, i.e., $E[R(\bold{S})]=\sum_{i \in [N]} \mathbb{P}_{i}(\bold{S}).$ Thus, the platform's optimization problem can be written as 
\begin{align}
\max_{\bold{S}}\quad&  E[R(\bold{S})]\label{eq:optimization}\\
s.t. \quad & S_i\cap S_j=\emptyset, \forall i\neq j.\nonumber
\end{align}
The constraint requires  no duplicated items in the recommendations. We define ${\bf S^*}$ as the optimal sequence of content, and $R^*$ as the corresponding maximum reward. While this problem is  combinatorial in nature, Theorem \ref{T.oneproduct} shows that this problem is polynomial-time solvable and  the optimal sequence ${\bf S^*}$ can be identified by Algorithm~\ref{A.optimalsequence}. 

\begin{algorithm}[h!]
  \For{$i=1:K$}{
  Sort $u_{j}$ for $C(j)=i$ in an decreasing order $o(\cdot)$ where $o(j)=r$ if item $j$ is at the $(r+1)^{th}$ position\;
  Set $\lambda_j=u_{j}f(o(j))$\;
   }
   
  Sort $\lambda_j$ for all  $j =1:N$ in an decreasing order $o'(\cdot)$ where $o'(r)=j$ if item $j$ is at the $r^{th}$ position\;
  Set ${\bf S}=(o'(1),o'(2),\cdots,o'(N)).$
 \caption{Determine the optimal sequence ${\bf S^*}$ to the offline combinatorial problem }\label{A.optimalsequence}
\end{algorithm}

\begin{theorem}\label{T.oneproduct}
Algorithm~\ref{A.optimalsequence} finds the optimal sequence ${\bf S^*}$.
 \end{theorem}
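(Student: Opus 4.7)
The plan is to prove optimality of Algorithm~\ref{A.optimalsequence} via two exchange arguments followed by a sorting procedure. Call a sequence \emph{within-type-sorted} if, for each type, its items appear in decreasing order of $u$; Algorithm~\ref{A.optimalsequence}'s output is the unique (up to ties) sequence that is within-type-sorted and simultaneously ordered globally by decreasing $\lambda_j = u_j f(o(j))$. I will show that any feasible $\bold{S}$ can be transformed into this output by a finite sequence of transpositions, each non-decreasing in $R$, which establishes optimality.

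\textbf{Stage 1 (within-type).} Fix two same-type items $A, B$ in $\bold{S}$ at positions $l_A < l_B$ with $u_A < u_B$, and transpose them. A direct check shows $z_i(\bold{S})$ is unchanged for every $i \ne A, B$: same-type items strictly between $l_A$ and $l_B$ still see the same count of preceding same-type items (with $B$ replacing $A$ at position $l_A$), items of other types are unaffected, and counts outside $[l_A, l_B]$ are preserved. Writing $R$ as a bilinear function of $x := z^{(l_A)}$ and $y := z^{(l_B)}$,
$$R = c_0 + P_0\bigl[\mathcal{A}\,x + \mathcal{B}\,y + \mathcal{C}\,xy\bigr],$$
where $P_0, \mathcal{A}, \mathcal{B}, \mathcal{C} \ge 0$ collect the unchanged contributions from positions outside $\{l_A, l_B\}$ (explicitly, $\mathcal{A} = 1 + (g-q)F + q(g-q)MG$ and $\mathcal{B} = qM[1+(g-q)G]$, with $F, M, G$ the middle reward, the middle continuation probability $\prod_{l_A<k<l_B} a^{(k)}$, and the tail reward respectively), the product $xy$ is invariant under the swap since both configurations yield $f(h_A)f(h_B)u_Au_B$. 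Hence
$$R(\text{swap}) - R(\text{orig}) = P_0(u_B - u_A)\bigl[\mathcal{A}\,f(h_A) - \mathcal{B}\,f(h_B)\bigr],$$
and the bracket rewrites as $\bigl[f(h_A) - qMf(h_B)\bigr] + (g-q)Ff(h_A) + q(g-q)MG\bigl[f(h_A) - f(h_B)\bigr]$, each summand being non-negative because $q, M \le 1$ (the latter since $a^{(k)} \le g \le 1$) and $f$ is decreasing.

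\textbf{Stage 2 (cross-type) and conclusion.} Under within-type sorting, $z_i(\bold{S}) = \lambda_i$. For adjacent items $A, B$ of different types at positions $l, l+1$ with $\lambda_A < \lambda_B$, transposition preserves every $z$-value (distinct types, and the multiset at positions $\le l+1$ is preserved for items beyond) and leaves $a_A a_B = a_B a_A$ unchanged, so only the two positions contribute:
$$R(\text{swap}) - R(\text{orig}) = P(\lambda_B - \lambda_A)(1-q) \ge 0.$$
Iteratively apply Stage-1 transpositions (taking the innermost out-of-order same-type pair each time, which strictly decreases within-type inversions) until $\bold{S}$ is within-type-sorted, then iteratively apply Stage-2 adjacent transpositions (preserving within-type sorting since items of distinct types are exchanged) until $\bold{S}$ is $\lambda$-sorted globally. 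The endpoint is Algorithm~\ref{A.optimalsequence}'s output. The main obstacle is Stage~1, where $A, B$ are generally non-adjacent and the multiplicative structure of $R$ intertwines their effects with intermediate and tail items; the key trick is the bilinear factorization together with the invariance of $xy$ under the swap, which reduces the sign test to the comparison of two non-negative linear coefficients.
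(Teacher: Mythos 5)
Your proof is correct, and while it lives in the same exchange-argument family as the paper's, its architecture is genuinely different. The paper argues by contradiction on the optimal sequence: if $\mathbf{S}^*$ is not ordered as Algorithm~\ref{A.optimalsequence} prescribes, it exhibits a \emph{neighboring} pair that is out of order either within a type (by $u$) or across types (by realized attractiveness $f(h)u$), and swapping that adjacent pair increases the expected reward. You instead build a constructive monotone path from an arbitrary sequence to the algorithm's output: a non-adjacent same-type exchange lemma (Stage 1), proved via the bilinear factorization of $R$ in $x$ and $y$ together with the invariance of $xy$ under the swap, followed by adjacent cross-type bubble sort (Stage 2). Your Stage-1 lemma is strictly stronger than an adjacent same-type swap, and this buys something real: with a steep discount $f$, a sequence can have weakly decreasing realized attractiveness and contain no adjacent same-type pair at all while still violating within-type $u$-order (e.g., two type-1 items at positions 1 and 3 separated by another type), so the adjacent-pair dichotomy as stated in the paper's outline does not directly apply there, whereas your non-adjacent exchange disposes of it immediately. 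I verified your coefficients $\mathcal{A}=1+(g-q)F+q(g-q)MG$ and $\mathcal{B}=qM[1+(g-q)G]$, the sign decomposition of $\mathcal{A}f(h_A)-\mathcal{B}f(h_B)$ (which uses $h_A<h_B$ so $f(h_A)\ge f(h_B)$, plus $qM\le 1$ and $g\ge q$), and the Stage-2 difference $P(\lambda_B-\lambda_A)(1-q)\ge 0$; all are right. Two small housekeeping points to make the argument airtight: your transposition path implicitly assumes the sequence contains all $N$ items, so add the one-line observation that appending items never decreases $E[R(\mathbf{S})]$ (earlier $z$-values and continuation probabilities are unchanged and the new terms are nonnegative), which lets you take the optimum to have full length; and in Stage 2 you should say explicitly that under within-type sorting any adjacent $\lambda$-inversion is necessarily cross-type (a same-type adjacent pair already satisfies $\lambda_{\mathrm{earlier}}\ge\lambda_{\mathrm{later}}$), which is what guarantees bubble sort can always make progress until the globally $\lambda$-sorted output is reached.
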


Due to the space limit, we present the proof outline in the main paper and the detailed proofs  are included in the Supplementary Material.\\
{\bf Proof outline:} We prove the result by contradiction. If the optimal sequence $\bold{S}^*$ is not ordered by Algorithm~\ref{A.optimalsequence}, then there exists a neighboring pair $I(i)$ and $I(i+1)$ such that either 1) $I(i)$ and $I(i+1)$ belong to the same type and $u_{I(i)}<u_{I(i+1)}$ or 2) $I(i)$ and $I(i+1)$ belong to different types and $f(h_{I(i)}({\bf S}^*))u_{I(i)}<f(h_{I(i+1)}({\bf S}^*))u_{I(i+1)}$. We then show that swapping items $I(i)$ and $I(i+1)$ increases the expected reward, which is a contradiction to the statement that ${\bf S}^*$ is optimal. $\Box$

For each category, Algorithm~\ref{A.optimalsequence} first ranks items based on their intrinsic relevance. If an item has the $(r+1)^{th}$ largest  relevance score within a category, we will then multiply the score with $f(r)$ to compute its attractiveness. Next, the algorithm ranks items across all categories in their decreasing attractiveness. The complexity of Algorithm~\ref{A.optimalsequence} is $O(N\log N)$. We also observe that 
the optimal sequence is completely determined by items' intrinsic relevance ${\bf u}$ and the discount factor $f$, and is independent of the resuming probabilities $g$ and $q$. In other words, the platform only needs to learn ${\bf u}$ and $f$ for the online learning task. Nevertheless, the resuming probabilities influence the learning speed as we will investigate in the following sections. 
\section{Learning with Known  Discount Factor $f$}\label{S.online}
In the previous section, we assume all the parameters  are  known to the platform. It is natural to ask what the platform should do in the absence of such knowledge. Beginning from this section, we will present learning algorithms to the fatigue-aware DCM bandit problem and characterize the corresponding regret bound. We first investigate a scenario where the discounting effect  caused by content fatigue is known (e.g., it could be estimated from historical data), and the platform needs to learn the item's intrinsic relevance  ${\bf u}$ to determine the optimal sequence. We want to emphasize the differences between this setting and other {learning to rank} problems such as \cite{combes2015learning,kveton2015cascading,kveton2015combinatorial}. Firstly, in our setting, only partial feedback to a list is observed as the user might leave before viewing all the recommendations. Secondly, as the order of the content influences their attractiveness, the click rate of an item is non-stationary. Thirdly, in many previous ``learning to rank" related problems, the position-biased examine probability is usually assumed to be fixed for each position. However, in our setting, this examine probability depends on the content of previously shown recommendations. 

Assume users arrive at time $t=1,\cdots, T$. For each user $t$, the platform determines a sequence of items $\tilde{\bold{S}}^t$. 
We evaluate a learning algorithm by its expected cumulative rewards under policy $\pi$, or equivalently, its expected cumulative regret  which is defined as 
$Regret_\pi(T;{\bf u})=E_\pi\left[\sum_{t=1}^T R(\bold{S}^*,{\bf u})-R(\tilde{\bold{S}}^t,{\bf u})\right].$ 

\subsection{Algorithm FA-DCM-P}\label{sect:FA-DCM-P}

Our strategy, which we call {F}atigue-{A}ware DCM with Partial information (FA-DCM-P), is a UCB-based policy. 
We first define an unbiased estimator for ${\bf u}$ and its upper confidence bound. 

We define the pair $(i,j)$ as an event that  item $j$ is the ${(i+1)}^{th}$ recommendation from the category $C(j)$. That is, $i$ recommendations from the same category $C(j)$ have been shown before item $j$.  We define $\mathcal{T}_{ij}$ as the epoch for the event $(i,j)$, i.e., $t\in \mathcal{T}_{ij}$ if and only if user $t$ observes the event $(i,j)$. Let $T_{ij}(t)$ denote the number of times that the event $(i,j)$ occurs by time $t$, i.e., $T_{ij}(t)=|\mathcal{T}_{ij}(t)|$. Define $T_j(t)=\sum_i T_{ij}(t)$. Let $z_{ij}^t$ be the binary feedback for user $t$ observing the event $(i,j)$, where $z_{ij}^t=1$ indicates that the user clicks,  and $z_{ij}^t=0$ means no click or skip.
%
%
Define $\hat{u}_{j,t}$ as the estimator for $u_j$, and $u_{j,t}^{UCB}$ as its upper confidence bound, i.e., 
\begin{equation}\label{E.update}
\hat{u}_{j,t} = \frac{1}{T_j(t)}\sum_i\sum_{t\in \mathcal{T}_{ij}}\frac{z_{ij}^t}{f_i}, \quad\quad u_{j,t}^{UCB}=\hat{u}_{j,t}+\sqrt{2\frac{\log t}{T_j(t)}}.
\end{equation}

In Algorithm FA-DCM-P, for a user arriving at time $t$, we use $\bold{u}_{t-1}^{UCB}$  to calculate the current optimal sequence of recommendations. Define $w_t$ as the last item examined by user $t$, which occurs when one of the following feedback is observed: 1) the user exits after clicking on $w_t$; 2) the user who views $w_t$  exits without clicking; 3) the sequence runs out. When the last examined item $w_t$ is shown,  we update $T_j(t)$ and the upper confidence bound to $\bold{u}_{t}^{UCB}$ based on the feedback.  

 \begin{algorithm}
 \textbf{Initialization:} Set $u_{i,0}^{UCB}=1$ and $T_i(0)=0$ for all $i\in [N]$; $t=1$\;
 \While{$t<T$}{
  Compute 
$\tilde{\bold{S}}^t=\arg\max_{\bold{S}}\quad E[R(\bold{S},\bold{u}_{t-1}^{UCB})]$ according to Theorem~\ref{T.oneproduct}\;
  Offer sequence $\tilde{\bold{S}}^t$, observe the user's feedback\;
 {
   update $ T_i(t)$; update ${\bf u}_t^{UCB}$ according to Equation~\eqref{E.update}; $t = t+1$\;
    }
 }
 \caption{[FA-DCM-P] An algorithm for fatigue-aware DCM bandit when $f$ is known}\label{A.fknown}
\end{algorithm}

\subsection{Regret analysis for Algorithm FA-DCM-P}
In this section, we quantify the regret bound for our proposed algorithm. We first derive the following lemmas which will be used in the regret analysis.

 \begin{lemma}\label{L.largedeviation}
 For any $t$ and $j\in [N]$ we have
 $$P\left(u_{j,t}^{UCB}-\sqrt{8\frac{\log t}{T_j(t)}}<u_j<u_{j,t}^{UCB}\right)\geq 1- \frac{2}{t^4}.$$  
 \end{lemma}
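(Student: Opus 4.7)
The plan is to reduce the event in the lemma to a two-sided concentration statement for the empirical mean $\hat u_{j,t}$ and then apply a Hoeffding-type inequality to the martingale of centered observations.

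First, I would simplify the event. Because $\sqrt{8\log t/T_j(t)}-\sqrt{2\log t/T_j(t)}=\sqrt{2\log t/T_j(t)}$ and $u_{j,t}^{UCB}=\hat u_{j,t}+\sqrt{2\log t/T_j(t)}$, the two-sided inequality in the lemma is equivalent to
\begin{equation*}
|\hat u_{j,t}-u_j|<\sqrt{2\log t/T_j(t)},
\end{equation*}
so it suffices to bound the complementary tail probability by $2/t^4$.

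Second, I would establish unbiasedness and set up the martingale structure. Conditional on event $(i,j)$ being triggered at round $s$, i.e., item $j$ appearing as the $(i+1)$-th recommendation from its category to a still-browsing user, $z_{ij}^s$ is Bernoulli$(f_i u_j)$, so $E[z_{ij}^s/f_i]=u_j$. Ordering the observations of item $j$ chronologically as $n=1,\ldots,T_j(t)$ and letting $X_n=z_{i_n,j}^{s_n}/f_{i_n}-u_j$, the sequence $\{X_n\}$ is a martingale difference sequence adapted to the history filtration, with each summand bounded.

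Third, I would apply Azuma--Hoeffding to the first $T_j(t)$ differences. Conditioning on $T_j(t)=n$ and taking the threshold $\epsilon=\sqrt{2\log t/n}$ makes the exponent in the tail bound equal to $-4\log t$, yielding a single-event tail probability of $2/t^4$. Since the bound holds uniformly in $n$, the conditioning on $T_j(t)$ drops out after taking expectation (an any-time martingale tail gives the same rate directly).

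The main obstacle is the boundedness: $z_{ij}^s/f_i\in[0,1/f_i]$ can exceed $1$ when $f_i<1$, so a literal Hoeffding for $[0,1]$-bounded variables does not immediately produce the $2$ in the exponent, and the $1/f_i$ range should in principle enter the bound. Closing this gap cleanly requires either invoking Bernstein's inequality and using the fact that the per-term conditional variance $f_i u_j(1-f_i u_j)/f_i^2\leq u_j/f_i$ is typically small, or assuming a uniform lower bound $f_{\min}$ on $f$ and absorbing the resulting factor into the confidence radius. This constant-tracking is the only delicate step; once it is handled, the rest is a routine UCB calculation.
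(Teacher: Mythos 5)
Your reduction is exactly right and matches the natural (and the paper's) route: since $\sqrt{8\log t/T_j(t)}-\sqrt{2\log t/T_j(t)}=\sqrt{2\log t/T_j(t)}$, the lemma's event is precisely $|\hat u_{j,t}-u_j|<\sqrt{2\log t/T_j(t)}$, and with threshold $\epsilon=\sqrt{2\log t/n}$ a Hoeffding--Azuma bound for $[0,1]$-valued increments gives exponent $-4\log t$, i.e.\ the tail $2/t^4$. The unbiasedness observation ($E[z_{ij}^s/f_i]=u_j$ conditional on the event $(i,j)$ being examined) is also the correct martingale setup.

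However, as a proof the attempt is incomplete at exactly the step that determines the stated constants, and you correctly diagnosed it but did not close it: the increments $z_{ij}^s/f_i$ live in $[0,1/f_i]$ with $1/f_i\geq 1$, so Azuma--Hoeffding with ranges $1/f_{i_n}$ yields a tail of the form $2\exp\left(-4 f_{\min}^2\log t\right)=2t^{-4f_{\min}^2}$, which is strictly weaker than $2/t^4$ whenever $f_{\min}<1$. Neither of your proposed repairs resolves this as stated: Bernstein's inequality still produces an $f$-dependent radius (the conditional variance $u_j(1-f_iu_j)/f_i$ can exceed $1/4$ for small $f_i$), and absorbing a factor $1/f_{\min}$ into the confidence radius changes the quantity $u_{j,t}^{UCB}$ that Algorithm FA-DCM-P and the regret proof of Theorem~\ref{T.regret} actually use, so the constant cannot simply be redefined away; you would need either a lower-bound assumption on $f$ or an $f$-weighted confidence width carried through the downstream analysis. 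A second, smaller gloss is the claim that conditioning on $T_j(t)$ ``drops out after taking expectation'': the event $\{T_j(t)=n\}$ depends on the observations themselves, so the clean argument applies Azuma to each fixed prefix length $n$ and takes a union bound over $n\in\{1,\dots,t\}$, which degrades the bound to order $t^{-3}$ unless the exponent is enlarged; this is the standard UCB bookkeeping and is harmless for the regret theorem (the series still converges), but it means the displayed probability $1-2/t^4$ does not follow verbatim from the argument you outline.
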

 
  \begin{lemma}\label{L.compare}
Assume $\bold{S}^*$ is the optimal sequence of messages.  Under the condition that $0\leq \bold{u} \leq \bold{u}'$, we have
$$ E[R(\bold{S}^*,{\bf u}')]\geq E[R(\bold{S}^*,{\bf u})].$$
\end{lemma}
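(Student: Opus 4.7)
The plan is to establish a stronger monotonicity claim: for \emph{any} fixed sequence $\bold{S}$, the expected reward $E[R(\bold{S},\bold{u})]$ is component-wise non-decreasing in $\bold{u}$. Taking $\bold{S}=\bold{S}^*$ and using $\bold{u}\leq \bold{u}'$ then yields the lemma immediately.

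I would prove the claim by backward induction on the position within the sequence. For a sequence $\bold{S}=(S_1,\dots,S_m)$, let $R_l(\bold{S},\bold{u})$ denote the expected number of clicks collected from positions $l,\dots,m$ conditional on the user reaching position $l$. Using the identity $gz+q(1-z)=q+(g-q)z$, the underlying dynamics give the recursion
\[
R_l(\bold{S},\bold{u})=z_{I(l)}(\bold{S})+\bigl(q+(g-q)z_{I(l)}(\bold{S})\bigr)\,R_{l+1}(\bold{S},\bold{u}),
\]
with base case $R_m(\bold{S},\bold{u})=z_{I(m)}(\bold{S})$, and $E[R(\bold{S},\bold{u})]=R_1(\bold{S},\bold{u})$. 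The key observation is that $z_{I(k)}(\bold{S})=f(h_{I(k)}(\bold{S}))\,u_{I(k)}$ depends on $\bold{u}$ only through the single coordinate $u_{I(k)}$, and every $h$-value is pinned down once $\bold{S}$ is fixed.

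The induction is then routine. The base case is immediate because $f\geq 0$. For the step, suppose $R_{l+1}(\bold{S},\cdot)$ is non-negative and non-decreasing in each $u_{I(k)}$ with $k\geq l+1$. Increasing $u_{I(l)}$ raises $z_{I(l)}$, which raises the first summand and, because $g\geq q\geq 0$ and $R_{l+1}\geq 0$, also raises the coefficient $q+(g-q)z_{I(l)}$ multiplying $R_{l+1}$. Increasing any $u_{I(k)}$ with $k>l$ leaves $z_{I(l)}$ untouched and increases $R_{l+1}$ by the inductive hypothesis. Non-negativity of $R_l$ is inherited. Coordinates $u_j$ with $j$ not appearing in $\bold{S}$ do not enter the expression at all and are trivially harmless.

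The only mildly delicate point is to notice that the recursion does \emph{not} require redefining the $h_{I(k)}$ values for later positions as we ``peel off'' the first one: since $\bold{S}$ is held fixed throughout, every $h_{I(k)}$ is a constant of the recursion, so the $z$-values behave as simple univariate monotone functions of their own coordinate of $\bold{u}$. Once this is in place, the sign bookkeeping---$f\geq 0$, $g\geq q\geq 0$, and $u_j\in[0,1]$ so $z_i\in[0,1]$---goes through cleanly, and the lemma follows by specializing $\bold{S}=\bold{S}^*$.
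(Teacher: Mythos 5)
Your proof is correct and follows essentially the same route as the paper's: a backward induction on the tail of the sequence using the recursion $R_l = z_{I(l)} + \bigl(q + (g-q)z_{I(l)}\bigr)R_{l+1}$ together with $g \geq q$ and non-negativity of the continuation value. If anything, your version is slightly more careful than the paper's sketch, which writes the attractiveness as $u_{I(k)}$ rather than $z_{I(k)}(\bold{S})$ and omits the observation (which you make explicitly) that the $h$-values are constants once $\bold{S}$ is fixed.
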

\begin{proof}[Proof of Lemma~\ref{L.compare}]
We prove this theorem by induction. For the optimal sequence $\bold{S}^*=(S_1,S_2,\cdots, S_N)$, since $u_{I(N)}'\geq u_{I(N)}$, we have $E[R(S_{N},\bold{u}')]\geq E[R(S_N,\bold{u})]$. Assume the inequality 
$$E[R((S_{k+1},\cdots, S_{N}),\bold{u}')]\geq E[R((S_{k+1},\cdots, S_{N}),\bold{u})]$$ holds, and now we prove that $E[R((S_k,\cdots, S_{N}),\bold{u}')]\geq E[R((S_k,\cdots, S_{N}),\bold{u})]$. Since 
\begin{align*}
    & E[R((S_k,\cdots, S_{N}),\bold{u}')]\\
    &= u_{I(k)}'+(gu_{I(k)}'+q(1-u_{I(k)}')) E[R((S_k,\cdots, S_{N}),\bold{u}')]\\
    &=u_{I(k)}'+((g-q)u_{I(k)}'+q) E[R((S_k,\cdots, S_{N}),\bold{u}')]\\
    &\geq u_{I(k)}+((g-q)u_{I(k)}+q) E[R((S_k,\cdots, S_{N}),\bold{u})]\\
    &=E[R((S_k,\cdots, S_{N}),\bold{u})].
\end{align*}
Therefore, we reach the desired result.
\end{proof}

\begin{theorem}\label{T.regret}
The regret of Algorithm FA-DCM-P during time $T$ is bounded by 
$$Regret_\pi(T)\leq  C \left(\frac{1-g^N}{1-g}\right)^{3/2}\sqrt{NT\log T}.$$
for some constant $C$.
\end{theorem}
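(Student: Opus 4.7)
The plan is a standard UCB regret analysis adapted to the fatigue-aware DCM structure. Three ingredients drive the argument: (i) a clean event on which the UCBs are valid; (ii) a reduction of the per-round regret to a perturbation regret, via Lemma~\ref{L.compare} and the optimality of $\tilde{\mathbf{S}}^t$ under the UCB parameters; and (iii) a sensitivity analysis of $R(\mathbf{S},\mathbf{u})$ that tracks how a perturbation in $u_{I(j)}$ propagates through both the click probability at position $j$ and the survival factors $\alpha_k=(g-q)z_k+q$ at later positions.

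Define the clean event $\mathcal{E}_t=\{u_j\leq u^{UCB}_{j,t}\leq u_j+\sqrt{8\log t/T_j(t)}\ \forall j\in[N]\}$. By Lemma~\ref{L.largedeviation} and a union bound, $\mathbb{P}(\mathcal{E}_t^c)\leq 2N/t^4$, so the rounds on which $\mathcal{E}_t$ fails contribute only $O(N(1-g^N)/(1-g))$ to the cumulative regret and are absorbed into the final bound. On $\mathcal{E}_t$, since $\mathbf{0}\leq\mathbf{u}\leq\mathbf{u}^{UCB}_{t-1}$, Lemma~\ref{L.compare} gives $E[R(\mathbf{S}^*,\mathbf{u}^{UCB}_{t-1})]\geq E[R(\mathbf{S}^*,\mathbf{u})]$; combined with the fact that $\tilde{\mathbf{S}}^t$ maximizes $E[R(\cdot,\mathbf{u}^{UCB}_{t-1})]$ by Theorem~\ref{T.oneproduct}, the round-$t$ regret is upper bounded by the perturbation regret $E[R(\tilde{\mathbf{S}}^t,\mathbf{u}^{UCB}_{t-1})-R(\tilde{\mathbf{S}}^t,\mathbf{u})]$.

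For the sensitivity estimate, fix a sequence $\mathbf{S}=(I(1),\ldots,I(m))$ and set $z_l=f(h_{I(l)})u_{I(l)}$ and $p_l=\prod_{k<l}\alpha_k\leq g^{l-1}$. Telescoping $\prod_{k<l}\alpha'_k-\prod_{k<l}\alpha_k=\sum_{j<l}p_j(\alpha'_j-\alpha_j)\prod_{j<k<l}\alpha'_k$ together with $\alpha'_j-\alpha_j=(g-q)f(h_{I(j)})(u'_{I(j)}-u_{I(j)})$ yields
\begin{equation*}
R(\mathbf{S},\mathbf{u}')-R(\mathbf{S},\mathbf{u})=\sum_{j=1}^m f(h_{I(j)})(u'_{I(j)}-u_{I(j)})\,W_j,
\end{equation*}
with $W_j=p'_j+(g-q)p_j\sum_{l>j}z_l\prod_{j<k<l}\alpha'_k$. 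Using $p_j,p'_j\leq g^{j-1}$, $z_l\leq 1$, $\alpha'_k\leq g$, and $\sum_{l>j}g^{l-j-1}\leq(1-g^{m-j})/(1-g)$ yields the uniform bound $W_j\leq g^{j-1}/(1-g)$; substituting the clean-event widths then produces a per-round bound of order $\sqrt{\log T}/(1-g)\cdot\sum_{j=1}^{m_t}g^{j-1}/\sqrt{T_{I(j)_t}(t-1)}$.

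Aggregating over $t$, I would apply Cauchy-Schwarz to split the $g^{j-1}$ weights: once within each round, contributing $\sum_{j=1}^N g^{j-1}\leq(1-g^N)/(1-g)$, and once across items and time using the per-item telescoping $\sum_{s=1}^{T_i(T)}s^{-1/2}\leq 2\sqrt{T_i(T)}$ together with the aggregate bound $\sum_i E[T_i(T)]\leq T(1-g^N)/(1-g)$ (since each round generates at most $(1-g^N)/(1-g)$ expected observations). The product of these factors with the $1/(1-g)$ from the sensitivity bound gives the claimed $((1-g^N)/(1-g))^{3/2}\sqrt{NT\log T}$. The main obstacle is the sensitivity step: a perturbation of $u_{I(j)}$ shifts every downstream $\alpha_k$, and a naive bound either picks up $1/q$ factors through $\alpha_k\geq q$ or polynomial factors in $m$; the telescoping identity above is essential for collapsing the cumulative effect to the clean $g^{j-1}/(1-g)$ scale, which drives the exponent $3/2$ on $(1-g^N)/(1-g)$ in the final bound rather than a looser higher power.
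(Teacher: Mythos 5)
Your overall architecture matches the paper's: the clean event built from Lemma~\ref{L.largedeviation} with a union bound, the sandwich $E[R(\tilde{\bold{S}}^t,\bold{u})]\leq E[R(\bold{S}^*,\bold{u})]\leq E[R(\bold{S}^*,\bold{u}^{UCB})]\leq E[R(\tilde{\bold{S}}^t,\bold{u}^{UCB})]$ via Lemma~\ref{L.compare} and the UCB-optimality of $\tilde{\bold{S}}^t$, a telescoping sensitivity decomposition of $R(\bold{S},\bold{u}')-R(\bold{S},\bold{u})$, and the count bound $\sum_i E_\pi[T_i(T)]\leq \frac{1-g^N}{1-g}T$. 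The telescoping identity itself and the bound $W_j\leq g^{j-1}/(1-g)$ are correct as algebra.

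However, there is a genuine gap in your aggregation step, and it sits exactly at the point where the paper is careful. By bounding $p_j,p'_j\leq g^{j-1}$ \emph{inside} $W_j$ before taking expectations, you sever the link between the regret weight and the dynamics of the counter $T_j$. Your per-round bound $\frac{\sqrt{\log T}}{1-g}\sum_j g^{j-1}/\sqrt{T_{I(j)}(t-1)}$ accrues a term for every \emph{displayed} item, but $T_j$ increments only when the user actually \emph{reaches} item $j$, which happens with probability $\kappa_{j,t}=\prod_{k=1}^{l_t(j)-1}\left((g-q)f(h_{I_t(k)}(\tilde{\bold{S}}^t))u_{I_t(k)}+q\right)$, which can be as small as $q^{l-1}\ll g^{l-1}$. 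Hence the per-item telescoping $\sum_{s=1}^{T_i(T)}s^{-1/2}\leq 2\sqrt{T_i(T)}$ does not apply to your sum: when $q<g^2$ (e.g., $g=0.9$, $q=0.5$) the worst-case contribution of position $l$ is of order $(g/\sqrt{q})^{\,l-1}\sqrt{T}$, which grows geometrically in $N$ and exceeds the claimed $\left(\frac{1-g^N}{1-g}\right)^{3/2}\sqrt{NT\log T}$. The fix — and what the paper's bound $\frac{1-g^N}{1-g}E_\pi\left[\sum_t\sum_j \kappa_{j,t}(u^{UCB}_{j,t}-u_j)1(E_t)\right]$ encodes — is to choose the telescoping split so that the \emph{true} survival product sits in front (write $p'_lz'_l-p_lz_l=p_l(z'_l-z_l)+(p'_l-p_l)z'_l$ with $p'_l-p_l=\sum_{j<l}p_j(\alpha'_j-\alpha_j)\prod_{j<k<l}\alpha'_k$, so $W_j=p_j\bigl[1+(g-q)\sum_{l>j}z'_l\prod_{j<k<l}\alpha'_k\bigr]\leq \kappa_{j,t}\cdot\frac{1-g^N}{1-g}$ up to constants), and only bound the \emph{downstream} sum by the geometric series. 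Then the tower property gives $E[\kappa_{j,t}/\sqrt{T_j(t)}]=E[1(\text{observe }j)/\sqrt{T_j(t)}]$, the path-wise telescoping and Jensen yield $\frac{1-g^N}{1-g}\sqrt{2\log T}\sum_i\sqrt{E_\pi[T_i(T)]}$, and Cauchy--Schwarz with the count bound delivers the stated exponent $3/2$. Your identity is one valid split among several; the proof only goes through with the split that keeps $\kappa_{j,t}$ intact until after the counting argument.
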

The detailed proof is included in the Supplementary Material.\\
{\bf Proof outline:}  Define event $A_{i,t}=\{u_{i,t}^{UCB}-\sqrt{8\frac{\log t}{T_i(t)}}<u<u_{i,t}^{UCB}\}$ and
$E_t=\bigcap_{i=1}^N A_{i,t}.$ On the ``large probability" event $E_t$, with Lemma~\ref{L.compare}, we obtain  $E[R(\tilde{\bold{S}}^t,{\bf u})]\leq E[R(\bold{S}^{*},\bold{u})]\leq E[R(\bold{S}^*,\bold{u}^{UCB})]\leq E[R(\tilde{\bold{S}}^t,\bold{u}^{UCB})]$. It implies that $E[R(\bold{S}^{*},\bold{u})]-E[R(\tilde{\bold{S}}^t,{\bf u})]\leq E[R(\tilde{\bold{S}}^t,\bold{u}^{UCB})]-E[R(\tilde{\bold{S}}^t,{\bf u})]$. Let $I_t(\cdot)$ denote the index function for user $t$. We then show that the cumulative difference on event $E_t$ can be bounded from above by
\begin{align*}
&E_\pi\left[\sum_{t=1}^T \sum_{i=1}^{|\tilde{{\bf S}}^t|}\prod_{k=1}^{i-1}((g-q)f(h_{I_t(k)}(\tilde{{\bf S}}^t))u^{UCB}_{I_t(k),t}+q)\right.\\
&\left.f(h_{I_t(i)}(\tilde{{\bf S}}^t))u^{UCB}_{I_t(i),t}1(E_t)\right]-E_\pi\left[\sum_{t=1}^T \sum_{i=1}^{|\tilde{{\bf S}}^t|}\prod_{k=1}^{i-1}((g-q)\right.\\
&\left.f(h_{I_t(k)}(\tilde{{\bf S}}^t))u_{I_t(k)}+q)f(h_{I_t(i)}(\tilde{{\bf S}}^t))u_{I_t(i)}1(E_t)\right]\\
&\leq \frac{1-g^N}{1-g}E_\pi\left[\sum_{t=1}^T \sum_{j=1}^N \kappa_{j,t}(u_{j,t}^{UCB}-u_{j,t})1(E_t)\right],
\end{align*}
where $\kappa_{j,t}=\prod_{k=1}^{l_t(j)-1}((g-q)f(h_{I_t(k)}(\tilde{{\bf S}}^t))u_{I_t(k)}+q)$. It denotes the probability that user $t$ observes item $j$  and  $l_t(i)$ specifies the position of item $i$ in $\tilde{\bold{S}}^t$. 
We then show that the regret term can be further bounded by $\frac{1-g^N}{1-g}\sqrt{2\log T}\sum_{i\in[N]}\sqrt{E_\pi[T_i(T)]}$. Since $\sum_{i\in[N]}E_\pi[T_i(T)]\leq \frac{1-g^N}{1-g}T$, we can bound it by $C_1 (\frac{1-g^N}{1-g})^{3/2}\sqrt{NT\log T}.$ To complete the proof, we show that on the ``small probability" event $E_t^c$, 
the regret can also be bounded. $\Box$

The following results can be shown under two special cases of the DCM setting.

\begin{corollary}\label{C.cascade}
When $g=q=1$, the regret can be bounded by
$$Regret_\pi(T)\leq  CN^2\sqrt{T\log T}.$$
\end{corollary}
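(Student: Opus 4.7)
My plan is to obtain the corollary as a direct specialization of Theorem~\ref{T.regret} to the regime $g = q = 1$. The geometric sum identity $\frac{1-g^N}{1-g} = \sum_{k=0}^{N-1} g^k$ makes the apparent singularity at $g=1$ removable, with limiting value $N$. Plugging this into the bound $C\bigl(\frac{1-g^N}{1-g}\bigr)^{3/2}\sqrt{NT\log T}$ of Theorem~\ref{T.regret} gives $CN^{3/2}\sqrt{NT\log T} = CN^2\sqrt{T\log T}$, which is exactly the claim.

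To justify the substitution I would revisit the proof outline of Theorem~\ref{T.regret} and check that each step remains valid at $g=q=1$. Lemma~\ref{L.largedeviation} and Lemma~\ref{L.compare} do not depend on $g$ or $q$. The factors $\kappa_{j,t} = \prod_{k}\bigl((g-q)f(h_{I_t(k)}(\tilde{{\bf S}}^t)) u_{I_t(k)} + q\bigr)$ collapse to $1$, consistent with the fact that a cascade user always examines every recommended item, so in particular $\sum_i E_\pi[T_i(T)] = NT$ here (matching the limit of the bound $\frac{1-g^N}{1-g} T$). The outer factor $\frac{1-g^N}{1-g}$ enters the proof precisely because one bounds the per-step reward difference and the total observation counts by geometric sums $\sum_{k=0}^{N-1} g^k$, and these take value $N$ at $g=1$; thus the final expression $C\bigl(\frac{1-g^N}{1-g}\bigr)^{3/2}\sqrt{NT\log T}$ extends continuously to $g = q = 1$ and Corollary~\ref{C.cascade} follows.

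I do not anticipate any real obstacle here. The only subtle point is interpreting the factor $\frac{1-g^N}{1-g}$ at $g=1$ via its removable limit $N$, so that each intermediate bound in the proof of Theorem~\ref{T.regret} remains finite and assembles into the stated $CN^2\sqrt{T\log T}$ rate. An alternative that avoids limits altogether is to redo the argument verbatim under $g=q=1$, where the reward decomposes additively as $E[R({\bf S}, {\bf u})] = \sum_{i} f(h_i({\bf S})) u_i$ and each geometric series $\sum_{k=0}^{N-1} g^k$ appearing in the original derivation is simply replaced by $N$; this recovers the same $O(N^2\sqrt{T\log T})$ conclusion without any continuity argument.
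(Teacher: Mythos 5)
Your proposal is correct and takes essentially the same route as the paper: Corollary~\ref{C.cascade} is obtained by specializing the bound of Theorem~\ref{T.regret} to $g=q=1$, reading the factor $\frac{1-g^N}{1-g}=\sum_{k=0}^{N-1}g^k$ at its removable limit $N$, so that $C\left(\frac{1-g^N}{1-g}\right)^{3/2}\sqrt{NT\log T}$ becomes $CN^{3/2}\sqrt{NT\log T}=CN^2\sqrt{T\log T}$. Your sanity checks that the intermediate quantities remain valid in this regime (each $\kappa_{j,t}$ collapses to $1$ since every item is examined, and $\sum_{i}E_\pi[T_i(T)]=NT$ matches the limit of $\frac{1-g^N}{1-g}T$) are exactly what justifies the substitution in the paper's argument.
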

\begin{corollary}\label{C.L_limit}
When at most $L$ items can be recommended to a single user, the regret can be bounded by
$$Regret_\pi(T)\leq C L^{3/2}\sqrt{NT\log T}.$$
\end{corollary}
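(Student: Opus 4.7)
The plan is to mirror the proof of Theorem \ref{T.regret} step by step and replace the geometric factor $\frac{1-g^N}{1-g}$ by the length cap $L$ at the two places where it enters. All of the concentration machinery (Lemma \ref{L.largedeviation}) and the monotonicity argument (Lemma \ref{L.compare}) used there carry over unchanged, since they do not rely on the recommendation length.

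First, I would introduce the same events $A_{i,t}=\{u_{i,t}^{UCB}-\sqrt{8\log t/T_i(t)}<u_i<u_{i,t}^{UCB}\}$ and $E_t=\bigcap_{i=1}^N A_{i,t}$. Lemma \ref{L.largedeviation} yields $P(E_t^c)\leq 2N/t^4$, so the contribution of the bad event $E_t^c$ to the cumulative regret is $O(1)$, which is absorbed into the constant. On $E_t$, Lemma \ref{L.compare} applied with ${\bf u}\leq {\bf u}^{UCB}_{t-1}$ together with the optimality of $\tilde{\bf S}^t$ for ${\bf u}^{UCB}_{t-1}$ gives the instantaneous regret bound $E[R({\bf S}^*,{\bf u})]-E[R(\tilde{\bf S}^t,{\bf u})]\leq E[R(\tilde{\bf S}^t,{\bf u}^{UCB})]-E[R(\tilde{\bf S}^t,{\bf u})]$.

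Next, I would expand this difference with the telescoping identity $\prod_k a_k-\prod_k b_k=\sum_j(a_j-b_j)\prod_{k<j}a_k\prod_{k>j}b_k$ applied position by position, exactly as in the proof outline of Theorem \ref{T.regret}. Each surviving factor $(g-q)fu+q$ is bounded by $g\leq 1$, so after summing over the $|\tilde{\bf S}^t|$ positions the geometric series $\sum_{i=1}^{|\tilde{\bf S}^t|}g^{i-1}$ appears. Under the length cap $|\tilde{\bf S}^t|\leq L$ this series is bounded by $L$ (regardless of $g$), which replaces $\frac{1-g^N}{1-g}$ by $L$ in the intermediate bound
$$E[R(\tilde{\bf S}^t,{\bf u}^{UCB})]-E[R(\tilde{\bf S}^t,{\bf u})]\;\leq\; L\sum_{j=1}^N \kappa_{j,t}(u^{UCB}_{j,t}-u_{j,t}),$$
with $\kappa_{j,t}$ the examination probability defined in the outline of Theorem \ref{T.regret}.

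Finally, summing over $t$ and invoking the confidence width $u^{UCB}_{j,t}-u_{j,t}\leq 2\sqrt{2\log t/T_j(t)}$ on $E_t$, I would apply Cauchy--Schwarz as in the original argument to obtain $\sum_{j}\sqrt{E_\pi[T_j(T)]}\leq \sqrt{N\sum_j E_\pi[T_j(T)]}$. The second place where the length constraint enters is the bound $\sum_j E_\pi[T_j(T)]\leq LT$, because each of the $T$ users examines at most $L$ items. Combining these yields
$$Regret_\pi(T)\;\leq\; C\cdot L\cdot\sqrt{2\log T}\cdot\sqrt{N\cdot LT}\;=\;C\, L^{3/2}\sqrt{NT\log T},$$
as claimed. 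The main obstacle is purely bookkeeping: making sure that the length cap $L$ is applied consistently in both the per-step telescoping (replacing $\sum_{i=0}^{N-1}g^i$) and in the aggregate observation count $\sum_j T_j(T)$, so that the two $L$'s combine to produce $L^{3/2}$; there is no new probabilistic difficulty beyond what already appears in Theorem \ref{T.regret}.
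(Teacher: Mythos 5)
Your proposal is correct and follows essentially the same route as the paper: it reruns the proof of Theorem~\ref{T.regret} and substitutes the length cap $L$ for the geometric factor $\frac{1-g^N}{1-g}$ in exactly the two places it occurs — the per-step position sum (since $\sum_{i=1}^{L}g^{i-1}\leq L$) and the total examination count $\sum_j E_\pi[T_j(T)]\leq LT$ — yielding the $L^{3/2}$ dependence. No further comment is needed.
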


{Corollary~\ref{C.cascade} characterizes the regret bound under a special setting where users never exit the platform and browse all the recommendations.} In theorem \ref{T.regret}, we do not limit the length of the sequence as online recommenders continuously send content to users to keep them on the platforms. If at most $L$ items will be recommended, the correponding regret is shown in Corollary~\ref{C.L_limit}. 
The detailed proofs can be found in the Supplementary Material.

\section{Learning with Unknown Discount Factor $f$}\label{S.discount}
In this section, we consider a more general scenario where the discount factor $f$ is also unknown and needs to be learned, in conjunction with learning items' intrinsic relevance scores $\bold{u}$.  
Before delving into our approach, we first discuss a few alternative approaches and their trade-offs. A straightforward approach towards this problem is to treat each combination of $f(i)$ and $u_j$ as an arm whose expected reward is $z_{ij}:=f(i)u_j$. However, it is not clear how to solve the offline combinatorial problem. 
An alternative approach is to  view the problem as a  generalized linear bandit which can be solved by GLM-UCB \cite{li2017provably}. Taking logarithm, we have $\log(z_{ij})=\log(f(i))+\log(u_j)$, which is a linear model. This approach is problematic for the following reasons: a) When $u_j$ and $f(i)$ go to 0, $\log(u_j)$ and $\log(f(i))$ can go to negative infinity, which implies that the parameter space is unbounded; b) In each step, GLM-UCB needs to compute the maximum likelihood estimators, which is computationally costly.
We now present our approach which is a UCB-based algorithm for this learning task.




\subsection{Algorithm FA-DCM}
Throughout this section, we define $M$ as a threshold such that after showing $M$ items from the same category, the discounting effect on item attractiveness due to content fatigue stays the same. That is, $f(r) = f(r+1)$ for $r\geq M$. Note that the maximum possible value for $M$ could be $N$.



Following the notations introduced in Section \ref{sect:FA-DCM-P}, we define the unbiased estimator for $f_i$ and $u_j$ as  $$\hat{f}^t_i = \frac{1}{\hat{T}_i(t)}\sum_j\sum_{r\in \mathcal{T}_{ij}(t)}\frac{z_{ij}^r}{\hat{u}_{j,t}},\quad \text{and} \quad \hat{u}_{j,t}=\frac{\sum_{r\in\mathcal{T}_{0j}(t)}z^r_{0j}}{T_{0j}(t)},$$
where $\hat{T}_i(t)=\sum_j T_{ij}(t)$, the subscript in $T_{0j}$ and $z^r_{0j}$  represents the event $(0,j)$, that is, item $j$ is the first message from category $C(j)$ to be shown. 
The corresponding upper confidence bounds at time $t$ are defined as
\begin{equation}\label{E.updatef}
f_{t}^{UCB}(i)=\hat{f}_i^t +\Delta_i^t,\quad \text{ and }\quad  u_{j,t}^{UCB}=\hat{u}_{j,t}+\sqrt{2\frac{\log t}{T_{0j}(t)}}, 
\end{equation} 
where
\begin{align*}
\Delta_{i}^t=\sum_j\sum_{r\in \mathcal{T}_{ij}(t)}\frac{z_{ij}^r}{\hat{T}_i(t)}\frac{1}{(\hat{u}_{j,t})^2}\left(1-\frac{1}{\hat{u}_{j,t}}\sqrt{\frac{\log t}{T_{0j} (t)}}\right)\\
1\left(\hat{u}_{j,t}\geq \sqrt{\frac{\log t}{T_{0j}(t)}}\right)\sqrt{\frac{\log t}{T_{0j}(t)}}+\sqrt{\frac{\log t}{\hat{T}_i(t)}}.
\end{align*}
$\Delta_i^t$ consists of two parts: the first part refers to the exploration term related to $\mu_j$ for all $j\in [N]$, and the second part is the exploration term with respect to $f(i)$.

We propose a learning algorithm, which we refer to as FA-DCM for the scenario when both the function $f$ and $\bold{u}$ need to be learned. Define $\theta^{UCB}=(f^{UCB},\bold{u}^{UCB})$ and $\theta=(f,\bold{u})$. At time $t$, we recommend a sequence $\tilde{\bold{S}}^t$ based on  $\theta_{t-1}^{UCB}$. Suppose there exists an item $i$ which we have not collected sufficient feedback such that $T_i(t)<\alpha T^{2/3}$ where $\alpha$ is a tuning parameter. This item will then be inserted to the beginning of the sequence $\tilde{\bold{S}}^t$ to guarantee that it will be examined. This procedure ensures that we obtain a reliable estimate for $u_i$, since the discount factor $f(0)=1$ for the first item.  Once the feedback of the user at time $t$ is observed, we then update $f^{UCB}_t$ and $\bold{u}^{UCB}_t$. Finally, it may be possible that some estimated values of $f_{t}^{UCB}(i)$ violate the decreasing property of $f$, i.e.,  $f_{t}^{UCB}(i)>f_{t}^{UCB}(i-1)$, then we need to correct them to enforce the property.


 \begin{algorithm}
 \textbf{Initialization:} Set $u_{j,0}^{UCB}=1$ and $f^{UCB}(i)=1$ for all $j\in [N]$ and $i\in [M]$ ; $t=1$\;
 \While{$t<T$}{
  Compute 
$\tilde{\bold{S}}^t= argmax_{\bold{S}} \quad E[R(\bold{S},\theta_{t-1}^{UCB})]$ according to Theorem~\ref{T.oneproduct}\;
\If{there exists $i\in[N]$ such that $T_i(t)<\alpha T^{2/3}$}{
	Insert item $i$ to the beginning of of $\tilde{{\bf S}}^t$ as the first item\;
}
  Offer sequence $\tilde{\bold{S}}^t$, observe the user's feedback\;
 {
   Update $u_{j,t}^{UCB}$ for all $j$ and $f_{t}^{UCB}(i)$ for all $i$ according to Equation~\eqref{E.updatef}\;
   \For{$i=1:M$}{
   \If{$f_{t}^{UCB}(i)>f_{t}^{UCB}(i-1)$}{
   $f_{t}^{UCB}(i)=f_{t}^{UCB}(i-1)$;
   }
   }
  $t = t+1$\;
    }
 }
 \caption{[FA-DCM] An algorithm for fatigue-aware DCM when $f$ is unknown}\label{A.DCMf}
\end{algorithm}

\subsection{Regret analysis for Algorithm FA-DCM}
Define $V_{i}^t = \frac{1}{\hat{T}_i}\sum_j\sum_{t\in \mathcal{T}_{ij}(t)}\frac{z_{ij}^t}{\mu_j}$.
We first have the following lemma to bound the difference between $V_i^t$ and $f_i$.

\begin{lemma}\label{L.boundVf} For any $t$, 
$P(|V_i^t-f(i)|\geq \sqrt{\frac{\log t}{\hat{T}_i(t)}})\leq\frac{2}{t^2}$.
\end{lemma}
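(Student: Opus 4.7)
The plan is to view $V_i^t$ as an average of conditionally unbiased, bounded random variables with common mean $f(i)$, and derive the tail bound via a Hoeffding/Azuma type concentration inequality combined with a union bound to handle the fact that $\hat{T}_i(t)$ is itself random.

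First, I would unpack what each summand in $V_i^t$ looks like. Conditional on user $r$ observing event $(i,j)$ (i.e.\ item $j$ being shown as the $(i{+}1)$-th piece from category $C(j)$), the click indicator $z_{ij}^r$ is Bernoulli with parameter $f(i)\,u_j$, so the rescaled term $z_{ij}^r/u_j$ has conditional mean exactly $f(i)$ and lies in $[0,1/u_j]$. Letting $\mathcal{F}_{r-1}$ denote the natural filtration up to time $r{-}1$, the sequence
\begin{equation*}
D_r \;=\; \sum_{j}\mathbf{1}\{r\in \mathcal{T}_{ij}(t)\}\!\left(\frac{z_{ij}^r}{u_j} - f(i)\right)
\end{equation*}
is a martingale difference sequence with $\hat{T}_i(t)V_i^t - f(i)\,\hat{T}_i(t) = \sum_{r=1}^{t} D_r$ whenever $\hat{T}_i(t)$ counts the cumulative occurrences of event $(i,\cdot)$.

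Second, I would apply the Azuma--Hoeffding inequality (or the standard Hoeffding bound, conditioning on the indices at which event $(i,\cdot)$ was observed, since the inter-user increments are conditionally independent Bernoulli rescalings). For a fixed value $n$ of $\hat{T}_i(t)$ and deviation $\varepsilon=\sqrt{\log t / n}$, this gives a tail of order $2\exp(-2 n \varepsilon^2 \cdot c)=2t^{-2c}$ for some constant $c$ depending on the effective range of the summands. The main nuisance here is the random sample size $\hat{T}_i(t)\in\{1,\ldots,t\}$: I would control it by a standard peeling/union-bound argument over the possible values of $\hat{T}_i(t)$, absorbing the resulting $\log t$ factor into the exponent so that the right-hand side simplifies to $2/t^2$.

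The principal obstacle is precisely this randomness of $\hat{T}_i(t)$ together with the fact that the per-sample range $1/u_j$ varies with $j$; a naive Hoeffding bound would give $2t^{-2u_{\min}^2}$ instead of $2/t^2$. I would handle it either by absorbing the $1/u_j$ factors via the $u_j\le 1$ inequality (which corresponds to the regime under which the stated constant is derived), or by invoking a self-normalized martingale concentration that replaces the worst-case range by the observed quadratic variation. Once the stopping-time issue is disposed of by peeling, the remaining algebra is a direct substitution of $\varepsilon = \sqrt{\log t/\hat{T}_i(t)}$ into the exponential bound, and the proof is complete.
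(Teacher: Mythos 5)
Your architecture---writing $V_i^t$ as an average of conditionally unbiased summands ($z_{ij}^r$ is Bernoulli with mean $f(i)u_j$ given the event $(i,j)$, so $z_{ij}^r/u_j$ has conditional mean $f(i)$), forming martingale differences $D_r$, and applying an Azuma--Hoeffding bound---is the natural route and is the same mechanism the paper relies on (compare the constants in Lemma~\ref{L.largedeviation}, which come from exactly such a Hoeffding deviation at a fixed value of the count). However, your treatment of the two obstacles you yourself identify does not go through. The range fix is backwards: the inequality $u_j\le 1$ makes the summand range $1/u_j\ge 1$ \emph{larger}, so it cannot ``absorb'' the $1/u_j$ factors. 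For $X_r\in[0,1/u_{j_r}]$, Hoeffding gives $P\left(|V_i^t-f(i)|\ge\varepsilon\right)\le 2\exp\left(-2\hat{T}_i(t)^2\varepsilon^2/\sum_r u_{j_r}^{-2}\right)$, and substituting $\varepsilon=\sqrt{\log t/\hat{T}_i(t)}$ yields the exponent $-2\log t/\overline{u^{-2}}$ with $\overline{u^{-2}}:=\hat{T}_i(t)^{-1}\sum_r u_{j_r}^{-2}\ge 1$; you recover $2/t^2$ only when every $u_j$ equals $1$. What is needed is a \emph{lower} bound on $u_j$, not the upper bound $u_j\le 1$. Your fallback via self-normalized or empirical-Bernstein concentration fails for the same reason: the conditional variance of $z_{ij}^r/u_j$ is $f(i)(1-f(i)u_j)/u_j$, which also blows up as $u_j\to 0$, so replacing the worst-case range by the observed quadratic variation does not restore a $u$-free exponent of the claimed size.

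The second gap is the union-bound bookkeeping for the random count. A union over the $t$ possible values of $\hat{T}_i(t)$ multiplies the per-$n$ tail by $t$, not by $\log t$ (peeling reduces the number of events to $O(\log t)$ only at the price of widening the deviation radius by a constant factor), and with the exact radius $\sqrt{\log t/n}$ there is no slack in the exponent to absorb either factor: even with range-$1$ summands the per-$n$ Hoeffding bound is exactly $2e^{-2\log t}=2t^{-2}$, so after any union you land at $2t^{-1}$ at best, not the stated $2/t^2$. The clean constant arises when the deviation is proved at each fixed realized count $n$ (equivalently, conditionally on $\hat{T}_i(t)=n$), which is how the paper's $2/t^2$ here and the analogous $2/t^4$ in Lemma~\ref{L.largedeviation} should be read. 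If you insist on a genuinely unconditional statement over the random $\hat{T}_i(t)$, you must either enlarge the radius to $\sqrt{c\log t/n}$ with $c>1$ before taking the union, or invoke a method-of-mixtures self-normalized inequality, and in either case the constants in the conclusion change accordingly.
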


\begin{lemma}\label{L.boundf} For any $1\leq i\leq M$, we have
$\sum_{t=1}^T P(|f_{t}^{UCB}(i)-  f(i)|\geq \Delta_i^t )\leq CN,$
for some constant $C$, where $f_{t}^{UCB}(i)$ is defined in Equation~\eqref{E.updatef}.
\end{lemma}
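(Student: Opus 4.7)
The plan is to start from the decomposition
\[
\hat f_i^t - f(i) \;=\; \underbrace{\big(\hat f_i^t - V_i^t\big)}_{\text{plug-in error}} \;+\; \underbrace{\big(V_i^t - f(i)\big)}_{\text{concentration error}},
\]
and show that on a high-probability good event each piece is dominated by exactly one of the two summands of $\Delta_i^t$. The proof then reduces to a union bound that is polynomial in $N$ and summable in $t$.

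Concretely, I would define the good event $E_t$ as the intersection of (a) $|V_i^t - f(i)|\le\sqrt{\log t/\hat T_i(t)}$ and (b) $|\hat u_{j,t}-u_j|\le s_j:=\sqrt{\log t/T_{0j}(t)}$ for every $j\in[N]$. Event (a) has failure probability at most $2/t^2$ by Lemma~\ref{L.boundVf}. Event (b) also has failure probability at most $2/t^2$ per $j$ by Hoeffding's inequality: since $f(0)=1$, the estimator $\hat u_{j,t}$ is simply the sample mean of $T_{0j}(t)$ i.i.d.\ Bernoulli$(u_j)$ clicks coming from first-position placements of item $j$. A union bound gives $P(E_t^c)\le 2(N+1)/t^2$, and summing over $t$ yields $\sum_{t=1}^T P(E_t^c)\le CN$ via $\sum_t 1/t^2\le \pi^2/6$.

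On $E_t$, I would bound the plug-in error term by term using the identity
\[
\frac{1}{\hat u_{j,t}}-\frac{1}{u_j}=\frac{u_j-\hat u_{j,t}}{u_j\,\hat u_{j,t}},
\]
the deviation $|u_j-\hat u_{j,t}|\le s_j$, and the lower bound $u_j\ge \hat u_{j,t}-s_j>0$, which is valid on the sub-event $\{\hat u_{j,t}\ge s_j\}$ singled out by the indicator in $\Delta_i^t$. Substituting back into $\hat f_i^t - V_i^t = \tfrac{1}{\hat T_i(t)}\sum_j\sum_{r\in\mathcal{T}_{ij}(t)} z_{ij}^r\bigl(\tfrac{1}{\hat u_{j,t}}-\tfrac{1}{u_j}\bigr)$ produces a fully data-driven bound matching the first summand of $\Delta_i^t$. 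Together with (a), which contributes the $\sqrt{\log t/\hat T_i(t)}$ piece, one obtains $|\hat f_i^t-f(i)|\le\Delta_i^t$ on $E_t$. Hence the event that $f_t^{UCB}(i)$ deviates from $f(i)$ by more than the confidence half-width $\Delta_i^t$ occurs only on $E_t^c$, and the union bound above closes the argument.

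I expect the main obstacle to be the plug-in step: converting an oracle-type deviation involving the unknown $u_j$ into a bound expressed solely in $\hat u_{j,t}$ and $s_j$. Because $1/u_j$ can blow up for small $u_j$, one must work on $\{\hat u_{j,t}\ge s_j\}$ and argue that the truncation on the complementary set $\{\hat u_{j,t}<s_j\}$ is harmless---either by noting it forces $T_{0j}(t)$ to be small, so that the event is rare and gets absorbed into the bad-event probability, or by showing its contribution is already dominated by the $\sqrt{\log t/\hat T_i(t)}$ summand of $\Delta_i^t$. Correctly accounting for this truncation, along with the dependence of the click samples across categories induced by the algorithm's adaptive sequencing, is the delicate piece of the argument.
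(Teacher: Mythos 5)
Your proposal is correct and follows essentially the same route as the paper: the same decomposition of $\hat f_i^t - f(i)$ into the plug-in error $\hat f_i^t - V_i^t$ (controlled via the identity $\tfrac{1}{\hat u_{j,t}}-\tfrac{1}{u_j}=\tfrac{u_j-\hat u_{j,t}}{u_j\hat u_{j,t}}$ and per-item Hoeffding bounds, matching the first summand of $\Delta_i^t$) and the concentration error $V_i^t - f(i)$ handled by Lemma~\ref{L.boundVf}, with a union bound over $j\in[N]$ summed over $t$ yielding the $CN$ bound. The truncation issue on $\{\hat u_{j,t}<s_j\}$ that you flag as delicate is exactly what the paper absorbs into its extra probability terms $\exp\left(-\tfrac{1}{2}T_{0j}(t)u_j^2\right)$ and the small-sample event for $T_{0j}(t)$, resolved as you suggest by the rarity of that event under the algorithm's forced exploration.
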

The detailed proof is included in the Supplementary Material.\\
{\bf Proof outline:}
First we note that 
\begin{align}
&P\left(|f_{t}^{UCB}(i)- f(i)|\geq \Delta_i^t \right)\notag\\
&\leq P\left(\left|\frac{1}{\hat{T}_i(t)}\sum_j\sum_{r\in \mathcal{T}_{ij}}z_{ij}^r\left(\frac{1}{\hat{u}_{j,t}}-\frac{1}{u_j}\right)\right|\right.\geq\notag\\
& \hspace{5mm} \left.\sum_j \sum_{r\in \hat{\mathcal{T}}_{ij}}\frac{z_{ij}^r}{\hat{T}_i(t)}\frac{1}{\hat{u}_{j,t}^2}\left(1-\frac{1}{\hat{u}_{j,t}}\sqrt{\frac{\log t}{T_{0j} (t)}}\right)\right.\notag\\
&\hspace{5mm}\left.1\left(\hat{u}_{j,t}\geq \sqrt{\frac{\log t}{T_{0j}(t)}}\right)\sqrt{\frac{\log t}{T_{0j}(t)}}\right) \label{E.part1}\\
&\hspace{5mm}+ P\left(\left|\frac{1}{\hat{T}_i(t)}\sum_j\sum_{r\in \mathcal{T}_{ij}}\frac{z_{ij}^r}{u_j}-f(i)\right|\geq \sqrt{\frac{\log t}{T_i(t)}}\right) \label{E.part2}
\end{align}
For Equation~\eqref{E.part1}, we can bound it above by
$$\sum_j \frac{2}{t^2} + \exp\left(-\frac{1}{2}T_{0j}(t)u_j^2\right) + P\left(\sqrt{\frac{\log t}{T_{0j}(t)}}<\frac{u_j}{2}\right).$$
For Equation~\eqref{E.part2}, using Lemma~\ref{L.boundVf}, we have
\begin{align*}
P\left(\left|\frac{1}{\hat{T}_i(t)}\sum_j\sum_{r\in \mathcal{T}_{ij}}\frac{z_{ij}^r}{u_j}-f(i)\right|\geq \sqrt{\frac{\ln t}{T_i(t)}}\right)\leq \frac{2}{t^2}.
\end{align*}
It implies that 
\begin{align*}
&\sum_{t=1}^T P(|f_{t}^{UCB}(i)-  f(i)|\geq \Delta_i^t )\leq CN
\end{align*}
for some constant $C$. $\Box$

\begin{lemma}\label{L.compare2}
Assume $\bold{S}^*$ is the optimal sequence of messages.  Under the condition that $0\leq \bold{u} \leq \bold{u}'$ and $0\leq f\leq f'$, we have
$$ E[U(\bold{S}^*, f',\bold{u}')]\geq E[U(\bold{S}^*,f,\bold{u})].$$
\end{lemma}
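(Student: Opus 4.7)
The plan is to mimic the proof of Lemma~\ref{L.compare} directly, replacing the scalar inequality on intrinsic relevances by the joint inequality on attractivenesses. The key observation is that for a \emph{fixed} sequence $\bold{S}^*$, the quantity $h_i(\bold{S}^*)$ (the number of items of type $C(i)$ shown before item $i$) does not depend on $f$ or $\bold{u}$. Hence when we pass from $(f,\bold{u})$ to $(f',\bold{u}')$ with $f\le f'$ and $\bold{u}\le\bold{u}'$ (both nonnegative), the pointwise attractiveness satisfies
\[
z_i(\bold{S}^*) \;=\; f(h_i(\bold{S}^*))\,u_i \;\le\; f'(h_i(\bold{S}^*))\,u'_i \;=\; z'_i(\bold{S}^*)
\]
for every position~$i$ of $\bold{S}^*$. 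Thus the two-parameter monotonicity reduces to a one-parameter monotonicity in the attractiveness vector $z(\bold{S}^*)$, with the positional weights fixed.

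I would then run the same backward induction on the suffix of $\bold{S}^*=(S_1,\ldots,S_N)$ used in Lemma~\ref{L.compare}. Writing $R_k:=E[R((S_k,\ldots,S_N),f,\bold{u})]$ and $R'_k:=E[R((S_k,\ldots,S_N),f',\bold{u}')]$, the base case $R'_N \ge R_N$ follows from $z'_{I(N)}(\bold{S}^*)\ge z_{I(N)}(\bold{S}^*)$. For the inductive step the reward recursion reads
\[
R'_k \;=\; z'_{I(k)}(\bold{S}^*) + \bigl((g-q)z'_{I(k)}(\bold{S}^*)+q\bigr)\,R'_{k+1},
\]
and analogously for $R_k$. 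Since $g\ge q$, the map $z\mapsto (g-q)z+q$ is nondecreasing, and $R_{k+1}\ge 0$. Combining $z'_{I(k)}\ge z_{I(k)}$ with the inductive hypothesis $R'_{k+1}\ge R_{k+1}$ then gives $R'_k\ge R_k$ term by term, and setting $k=1$ yields the claim. (I am reading the symbol $U$ in the statement as the reward $R$; the notation is consistent with Lemma~\ref{L.compare}.)

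I do not expect a genuine obstacle here, since the argument is structurally identical to Lemma~\ref{L.compare}; the only point that requires some care is the joint monotonicity step showing $z'_i\ge z_i$ pointwise, which uses nonnegativity of $f$ and $\bold{u}$ in a crucial way (without it, the product $f\cdot u$ would not be monotone). It is also worth emphasizing in the write-up that $\bold{S}^*$ is held fixed throughout the comparison, so one does not need to re-argue optimality under the new parameters; the lemma is purely a monotonicity statement about the reward functional evaluated at one and the same sequence.
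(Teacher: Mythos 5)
Your proposal is correct and matches the paper's intended argument: the paper proves Lemma~\ref{L.compare2} simply by noting it follows by the same induction as Lemma~\ref{L.compare}, and your reduction of the two-parameter monotonicity to pointwise monotonicity of the attractiveness $z_i(\bold{S}^*)=f(h_i(\bold{S}^*))u_i$ for the fixed sequence $\bold{S}^*$ is exactly the observation that makes that induction go through. Your write-up in fact fixes a small typo in the paper's recursion for Lemma~\ref{L.compare} (where the suffix index $k+1$ is miswritten as $k$), but the approach is identical.
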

{\bf Proof: }Similar to the proof of Lemma~\ref{L.compare}, we can prove it by induction. $\Box$




\begin{theorem}\label{T.regretf}
The regret of Algorithm FA-DCM during time $T$ is bounded by 
$$Regret_\pi(T)\leq C \left(\frac{1-g^N}{1-g}\right)^2\sqrt{NT^{4/3}\log T}.$$
\end{theorem}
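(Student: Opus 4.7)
The plan is to follow the template of the proof sketch of Theorem~\ref{T.regret} but with two substantial modifications: handling the joint estimation error in $f$ and $\bf u$ via Lemma~\ref{L.boundf} and Lemma~\ref{L.compare2}, and absorbing the $O(T^{2/3})$ cost of the forced-exploration step built into Algorithm~\ref{A.DCMf}.

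First, I partition the horizon into three parts: (i) rounds where some confidence bound is violated, (ii) rounds where the forced-exploration condition $T_i(t)<\alpha T^{2/3}$ triggers for some $i$, and (iii) ``clean'' rounds. Define the good event $E_t$ to be the intersection, over all items, of $\{|f_t^{UCB}(i)-f(i)|\le \Delta_i^t\}$ (controlled by Lemma~\ref{L.boundf}) and $\{|u_{j,t}^{UCB}-u_j|\le 2\sqrt{2\log t/T_{0j}(t)}\}$ (controlled by a Hoeffding argument analogous to Lemma~\ref{L.largedeviation}). Together these give $\sum_t P(E_t^c)=O(N)$, so the contribution from (i) is $O(N(1-g^N)/(1-g))$ since the per-round reward is at most $(1-g^N)/(1-g)$. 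For (ii), each item can trigger an insertion at most $\alpha T^{2/3}$ times, so there are at most $N\alpha T^{2/3}$ such rounds, contributing $O(NT^{2/3}(1-g^N)/(1-g))$.

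On the clean rounds in $E_t$, Lemma~\ref{L.compare2} and the fact that $\tilde{\bf S}^t$ is optimal for $\theta^{UCB}_{t-1}$ (Theorem~\ref{T.oneproduct}) give $E[R({\bf S}^*,\theta)]\le E[R({\bf S}^*,\theta^{UCB})]\le E[R(\tilde{\bf S}^t,\theta^{UCB})]$, so the per-round regret is at most $E[R(\tilde{\bf S}^t,\theta^{UCB})]-E[R(\tilde{\bf S}^t,\theta)]$. I then telescope along positions as in the sketch of Theorem~\ref{T.regret}, using the recursion $R=z+((g-q)z+q)R_{\text{tail}}$ and bounding the geometric series by $(1-g^N)/(1-g)$, to obtain a bound of the form
$$\frac{1-g^N}{1-g}\,E_\pi\!\left[\sum_{t=1}^T\sum_i \kappa_{i,t}\left(f_t^{UCB}(h_{i,t})\,u^{UCB}_{I_t(i),t}-f(h_{i,t})\,u_{I_t(i)}\right)\mathbf{1}(E_t)\right],$$
where $\kappa_{i,t}$ is the probability that position $i$ in $\tilde{\bf S}^t$ is observed under the true $\theta$.

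Splitting $f^{UCB}u^{UCB}-fu = f^{UCB}(u^{UCB}-u)+u(f^{UCB}-f)$ and applying $E_t$ bounds each term by $2f^{UCB}\sqrt{2\log t/T_{0j}(t)}+2u\,\Delta_i^t$. The $u$-radius contribution sums, by Cauchy--Schwarz with $\sum_j E[T_{0j}(T)]\le T$ and the forced-exploration lower bound $T_{0j}(T)\ge \alpha T^{2/3}$, to $O((1-g^N)/(1-g)\,\sqrt{NT^{4/3}\log T})$. The $\Delta_i^t$ contribution splits into the clean exploration radius $\sqrt{\log t/\hat T_i(t)}$, which gives only $O((1-g^N)/(1-g)\,\sqrt{NT\log T})$ after Cauchy--Schwarz with $\sum_i E[\hat T_i(T)]\le (1-g^N)/(1-g)\,T$, and the coupling term $\tfrac{1}{\hat u_{j,t}^2}\sqrt{\log t/T_{0j}(t)}$, which is the main obstacle. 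The latter is controlled by using the indicator $\mathbf{1}(\hat u_{j,t}\ge\sqrt{\log t/T_{0j}(t)})$ together with forced exploration: once $T_{0j}(t)\ge \alpha T^{2/3}$, $\hat u_{j,t}$ concentrates around $u_j$ so $1/\hat u_{j,t}^2=O(1)$, leaving a per-round radius of order $T^{-1/3}\sqrt{\log T}$ that again sums to $O(T^{2/3}\sqrt{\log T})$. Combining the three regions, the dominant term is $C((1-g^N)/(1-g))^2\sqrt{NT^{4/3}\log T}$ (one $(1-g^N)/(1-g)$ factor from the telescoping, the second from Cauchy--Schwarz on $\sum_i E[\hat T_i(T)]$), which absorbs the $O(NT^{2/3})$ forced-exploration and $O(N)$ bad-event regret into the leading constant. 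The hardest step will be controlling the $\Delta_i^t$ coupling through the $1/\hat u_{j,t}^2$ factor at the correct rate and pairing it with the $\kappa_{i,t}$ weights so that the forced-exploration threshold $\alpha T^{2/3}$ exactly matches the Cauchy--Schwarz budget and produces the $T^{2/3}$ exponent in the final bound.
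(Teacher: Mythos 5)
Your proposal matches the paper's proof in all essentials: the same good events built from Lemma~\ref{L.boundf} (radius $\Delta_i^t$ for $f$) and a Hoeffding bound for $\bold{u}$, the same split into an exploration phase where $T_{0j}(t)<\alpha t^{2/3}$ and clean rounds, the same optimism chain via Lemma~\ref{L.compare2} and Theorem~\ref{T.oneproduct}, and the same telescoping decomposition into a $\kappa_{j,t}$-weighted $u$-error sum and a $\tilde{\kappa}_{i,t}$-weighted $f$-error sum, with the forced-exploration threshold producing the $T^{2/3}$ rate exactly as in the paper. The only element you omit is the paper's monotonized confidence bound $\tilde{f}_t^{UCB}(i)=\min_{j\leq i}f_t^{UCB}(j)$ with the events $\tilde{B}_{i,t}$ and $\tilde{E}_t$, needed because Algorithm~\ref{A.DCMf} enforces decreasingness of $f_t^{UCB}$ and the optimism argument must be run with the corrected values; this is a routine patch (on the good event each $f_t^{UCB}(j)>f(j)\geq f(i)$ for $j\leq i$ since the true $f$ is decreasing, so $\tilde{f}_t^{UCB}(i)$ still upper-bounds $f(i)$ and its width is no larger), so your argument is essentially the paper's.
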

{\bf Proof outline:} Define the following events,   $A_{j,t}=\{u_{j,t}^{UCB}-\sqrt{8\frac{\log t}{T_j(t)}}<u_j<u_{j,t}^{UCB}\}$, $B_{i,t}=\{f_{t}^{UCB}(i)-2\Delta_i^t<f(i)<f_{t}^{UCB}(i)\},$
and 
$E_t=\bigcap_{j=1}^N (A_{j,t})\bigcap_{i=1}^M( B_{i,t})$. 
Let $\tilde{f}_{t}^{UCB}(i)=\min_{j\leq i} f_{t}^{UCB}(j)$,  $\tilde{B}_{i,t}=\left\{\tilde{f}_{t}^{UCB}(i)-2\Delta_i^t<f(i)<\tilde{f}_{t}^{UCB}(i)\right\}$ and $\tilde{E}_t=\bigcap_{j=1}^N (A_{j,t})\bigcap_{i=1}^M( \tilde{B}_{i,t}).$ Note that if $E_t$ holds, $\tilde{E}_t$ also holds. We consider time $t$ is in the exploration phase, denoted as $t\in \mathcal{E}$, if there exists $j\in[N]$ such that $T_{0j}(t)<\alpha t^{2/3}$. Similar to the proof for Theorem~\ref{T.regret}, 
the regret quantity $E[\sum_{t\notin\mathcal{E}} R(\bold{S}^*,{\bf u})-R(\tilde{\bold{S}}^t,{\bf u})]$ on event $\tilde{E}_t$ for $t\notin \mathcal{E}$ can be first bounded from above by
\begin{align*}
 &\frac{1-g^N}{1-g}\left(E_\pi\left[\sum_{t\notin \mathcal{E}} \sum_{j=1}^N \kappa_{j,t}(u_{j,t}^{UCB}-u_{j})1(\tilde{E}_t)\right]\right.\\
 &+E_\pi\left.\left[\sum_{t\notin\mathcal{E}} \sum_{i=1}^M \tilde{\kappa}_{i,t}(\tilde{f}_t^{UCB}(i)-f(i))1(\tilde{E}_t)\right]\right),
 \end{align*}
where 
$\tilde{\kappa}_{i,t}=\sum_{\tilde{l}_t(i)}\prod_{k=1}^{\tilde{l}_t(i)-1}((g-q)f(h_{I_t(k)}(\tilde{{\bf S}}^t))u_{I_t(k)}+q)$ denotes the probability of observing an item as the $i^{th}$ recommendation within its own category. The regret term can be further bounded by $C \left(\frac{1-g^N}{1-g}\right)^2\sqrt{NT^{4/3}\log T}$.
Finally using Lemma~\ref{L.boundf}, we can bound the ``small probability" event $E_t^c$ and the cumulative regret on $t\in\mathcal{E}$. $\Box$

\section{Numerical Experiments}\label{S.numerical}
In this section, we perform four sets of numerical experiments to evaluate the performance of our online learning algorithms. In the first two experiments, we investigate the robustness of Algorithm FA-DCM-P and FA-DCM respectively. In the last two experiments, we compare our algorithm with a benchmark using   simulated and  real datasets respectively.

\subsection{Robustness study}
{\bf Experiment I: With a known discount factor} In this experiment, we investigate the robustness of Algorithm FA-DCM-P, when the discount factor $f$ is known to the platform. We consider a setting with three categories, and each contains 10 items. Items' intrinsic relevance $\bf{u}$ is uniformly generated from $[0,0.5]$. The known discount factor is set to $f(r)=\exp(-0.1*(r-1))$. We fix the resuming probability after non-clicks to $q=0.7$, and compare three  cases by varying the resuming probabilities after clicks, i.e, Case 1: $g=0.95$; Case 2: $g=0.85$; Case 3: $g=0.75$. For each case, we run 20 independent simulations.

{\bf Experiment II: With an unknown discount factor} We assume both $f$ and $\bf{u}$ are unknown in this experiment, and study the robustness of Algorithm FA-DCM. There are three categories which contain 10 items each. We consider three cases, where we use $q=0.7$ as in Experiment I and vary  $p$ and the discount factor $f$. More precisely, 
Case 4: $g=0.85$, $f(r)=\exp(-0.1(r-1))$; 
Case 5: $g=0.85$,  $f(r)=\exp(-0.15(r-1))$; 
Case 6: $g=0.75$, $f(r)=\exp(-0.1(r-1))$.

{\bf Result: }The left plot in Fig~\ref{fig:Algorithm2} shows the average regret for Algorithm FA-DCM-P as the solid line and its 95\% sampled confidence interval as the shaded area, generated from 20 independent simulations for each case.  Average values for the regret at $T=10,000$  are 
307.52, 277.77 and 265.73 
for Case 1, 2 and 3 respectively. The result shows that  the regret decreases as $g$ decreases, which is consistent with the regret bound shown in Theorem~\ref{T.regret}. 

 The right plot in Fig~\ref{fig:Algorithm2} shows the results for Algorithm FA-DCM, where the average regret at $T=10,000$ is 1237.82, 1178.24, and 991.97 for Case 4, 5 and 6 respectively. Comparing Case 4 and 5, we find that when the discount factor increases, the regret decreases. Meanwhile, comparing Case 4 and 6, it shows that the regret increases with $g$, which is consistent with the conclusion in Theorem~\ref{T.regretf}. Comparing Case 2 and 4, as well as Case 3 and 6, the simulation results show that the regret is much larger when $f$ also needs to be learned.


\begin{figure}[h!]
\centering
\begin{subfigure}{.24\textwidth}
  \centering
  \includegraphics[width=\linewidth]{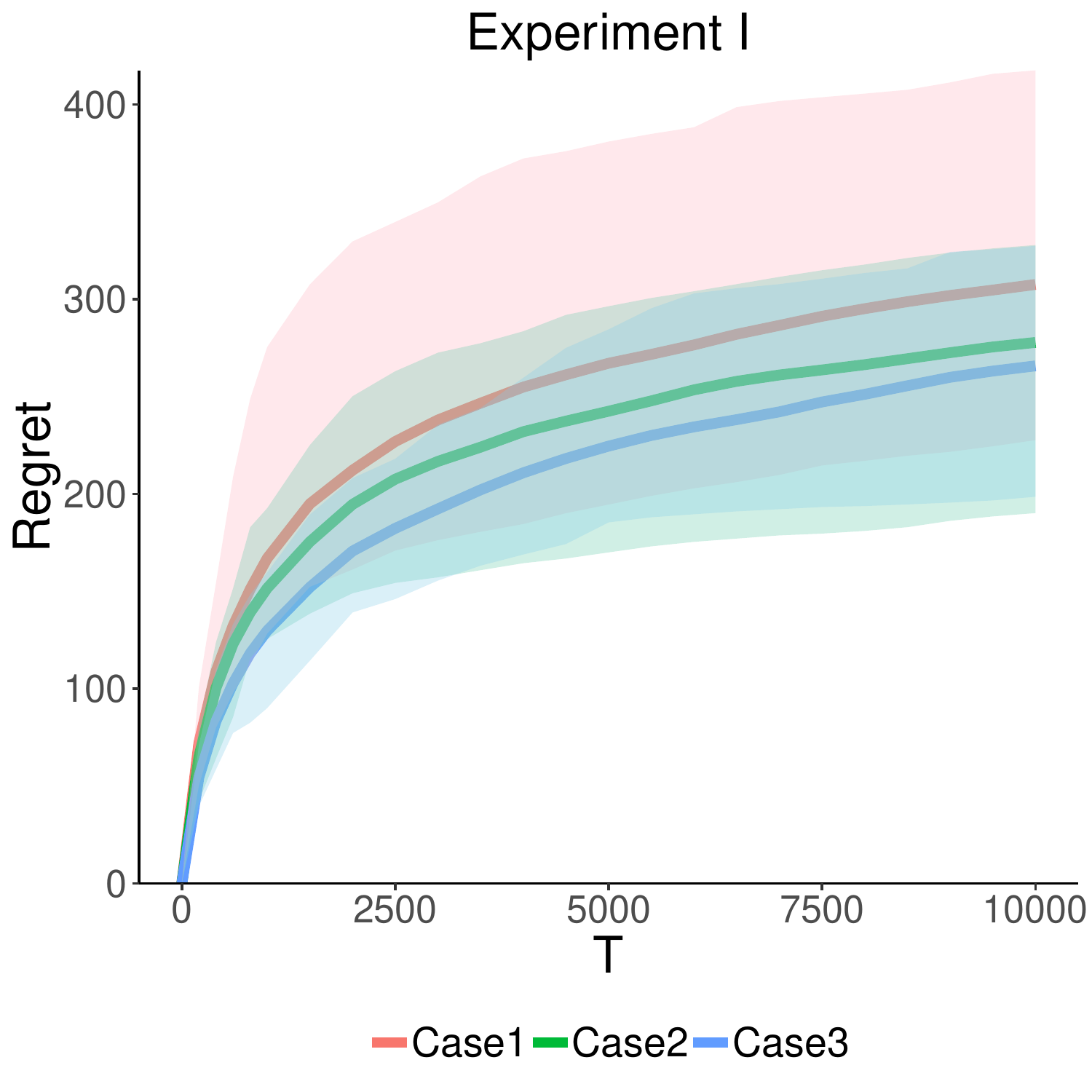}
\end{subfigure}%
\begin{subfigure}{.24\textwidth}
  \centering
  \includegraphics[width=\linewidth]{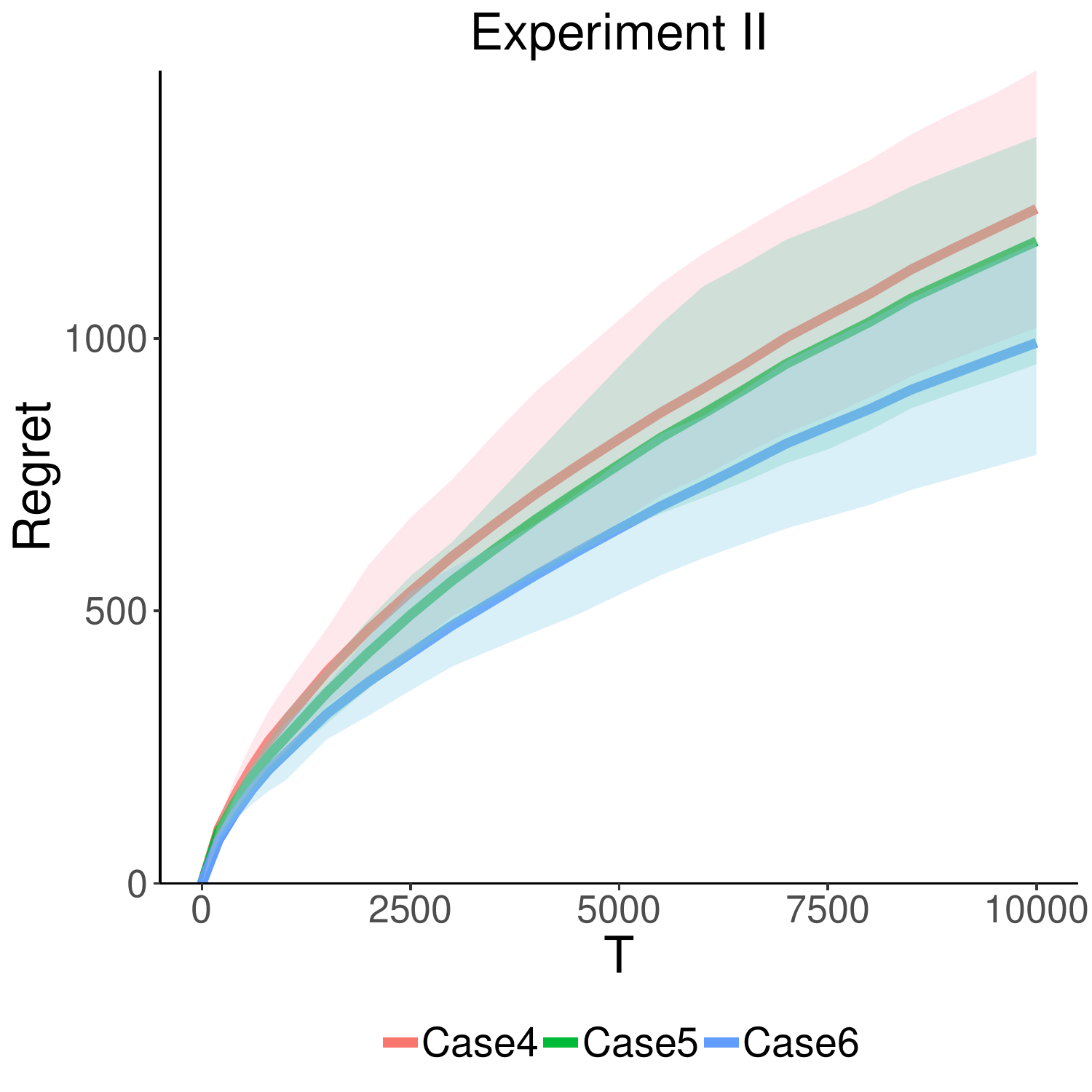}
\end{subfigure}

\caption{Simulation results for Experiments I and II }\label{fig:Algorithm2}
\end{figure}

\subsection{Comparison with a benchmark algorithm}
Since there is no other algorithm addressing our specific setting, we choose the explore-then-exploit algorithm as a benchmark for Algorithm FA-DCM. During the exploration time steps, items are randomly selected. For the first $t$ time steps, we ensure that there are $\beta\log t$ periods for exploration where $\beta$ is a tuning parameter.  That is, if the number of exploration periods before $t$ is less than $\beta\log t$, then we choose $t$ as the exploration period where items are randomly selected. 
During the exploitation period, we use the empirical mean of $\bold{u}$ and $f$ to find the optimal sequence. In both Experiment III and IV, we run 20 independent simulations to compare the performance of our algorithm against the benchmark.

{\bf Experiment III: With simulated data} 
We set parameters $g=0.75$, $q=0.7$ and $f(r)=\exp(-0.1(r-1))$. Items' intrinsic relevance $\bf{u}$ is uniformly generated from $[0,0.5]$.  

{\bf Experiment IV: With real data} The data is from Taobao\footnote{ \url{https://tianchi.aliyun.com/datalab/dataSet.html?spm=5176.100073.0.0.14d53ea7Rleuc9\&dataId=56}}, which contains 26 million ad display and click logs from 1,140,000 randomly sampled users from the website of Taobao for 8 days (5/6/2017-5/13/2017). To fit the data into our framework, 
we use long period of inactivity from the last interaction with the website as a proxy for exiting the platform. Based on the data, we estimate that the average  probability to resume browsing after non-clicks is 0.823, i.e., $q=0.823$. Meanwhile, the probability to resume browsing after clicking is higher, i.e., $g=0.843$. For the discount factor, we use $f(r)=\exp(-0.1(r-1))$. In this experiment, we consider five different item categories, and select the top 20 selling products from each category. We estimate the click probabilities of these 100 products, and use them as the ``groundtruth'' for $u_i$. 


{\bf Result: } Fig~\ref{fig:Algorithm3} compares the performance of our algorithm and the benchmark based on 20 independent simulation. Our algorithm outperforms the benchmark in both experiments, highlighting the benefits of simultaneous exploration and exploitation. In particular, in Experiment III which is shown as the left plot, the average regret for the benchmark is 1279.88 at T = 10,000, which is 29.02\% higher than that of Algorithm FA-DCM. Meanwhile, in Experiment IV as shown as the right figure, The average regret for Algorithm FA-DCM is 1553.41, while the regret for the benchmark is 1677.37 at $T=10,000$.

\begin{figure}[h!]
\centering
\begin{subfigure}{.24\textwidth}
  \centering
  \includegraphics[width=\linewidth]{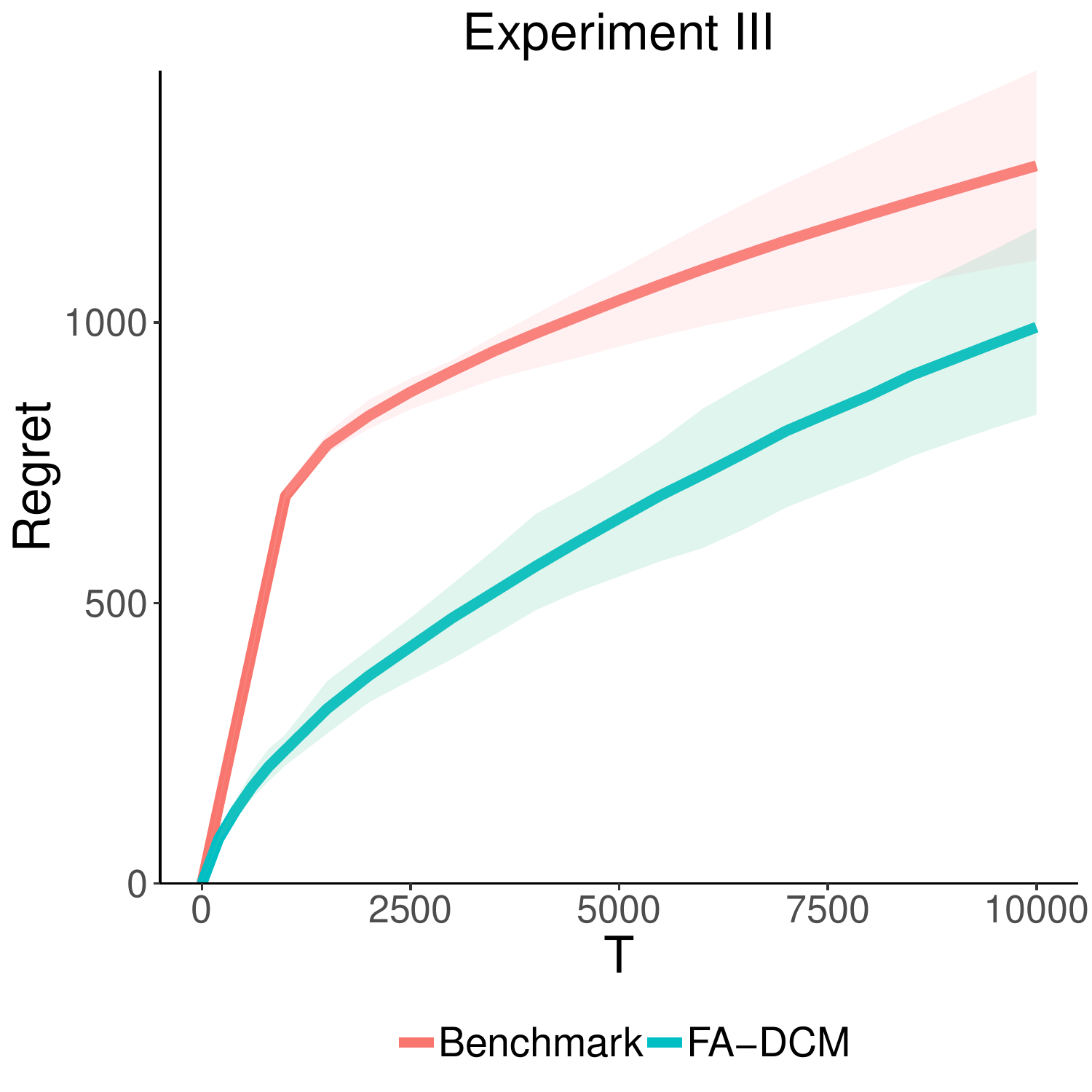}
\end{subfigure}%
\begin{subfigure}{.24\textwidth}
  \centering
  \includegraphics[width=\linewidth]{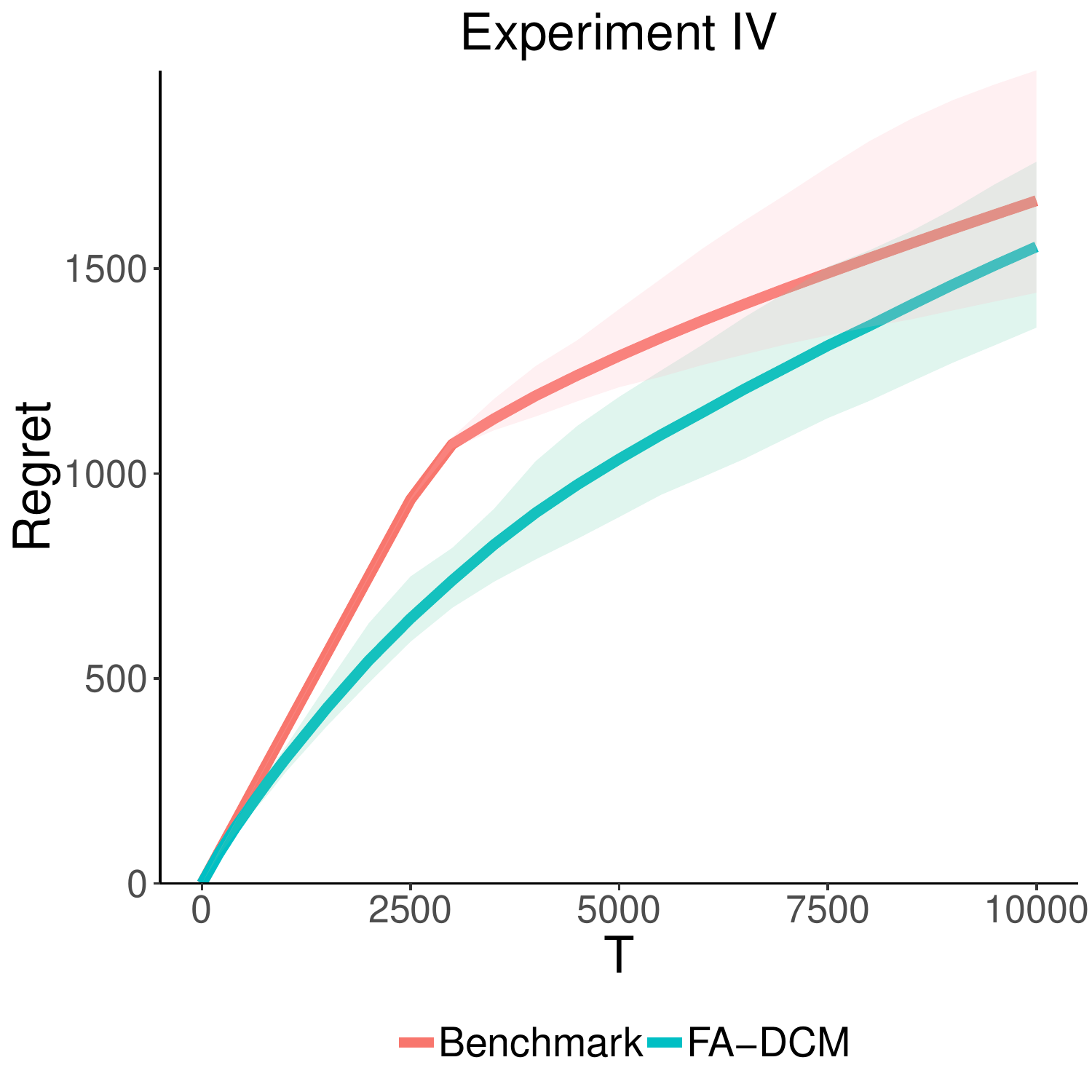}
\end{subfigure}
\caption{Simulation results for Experiments III and IV }\label{fig:Algorithm3}
\end{figure}

\section{Conclusion}
In this work, we proposed an online recommender system which incorporates both marketing fatigue and content fatigue. The setting takes into account of user's fatigue which is triggered by an overexposure to irrelevant content, as well as the boredom one might experience from seeing too much similar content. Depending on whether the discount factor $f$ is known to the platform, we proposed two online learning algorithms and quantified its regert bound for the DCM bandit setting. We further showed the regret bounds for two learning algorithms and used numerical experiments to illustrate the performance. 

There are several interesting future directions. One natural extension is to consider the contextual version of this model by incorporating item features and user attributes, such that the system is capable of producing personalized recommendations. Another direction for future work is to approach the fatigue-aware DCM bandits via Thompson Sampling, although the regret analysis remains a challenging problem. Last but not least, as users' preferences often change with time, it would be interesting to incorporate non-stationary item intrinsic relevance ($u_i$) into the model. 

\bibliography{fatigue}
\bibliographystyle{aaai}





\newpage
\newpage
~\\
\includepdf[pages=1-14]{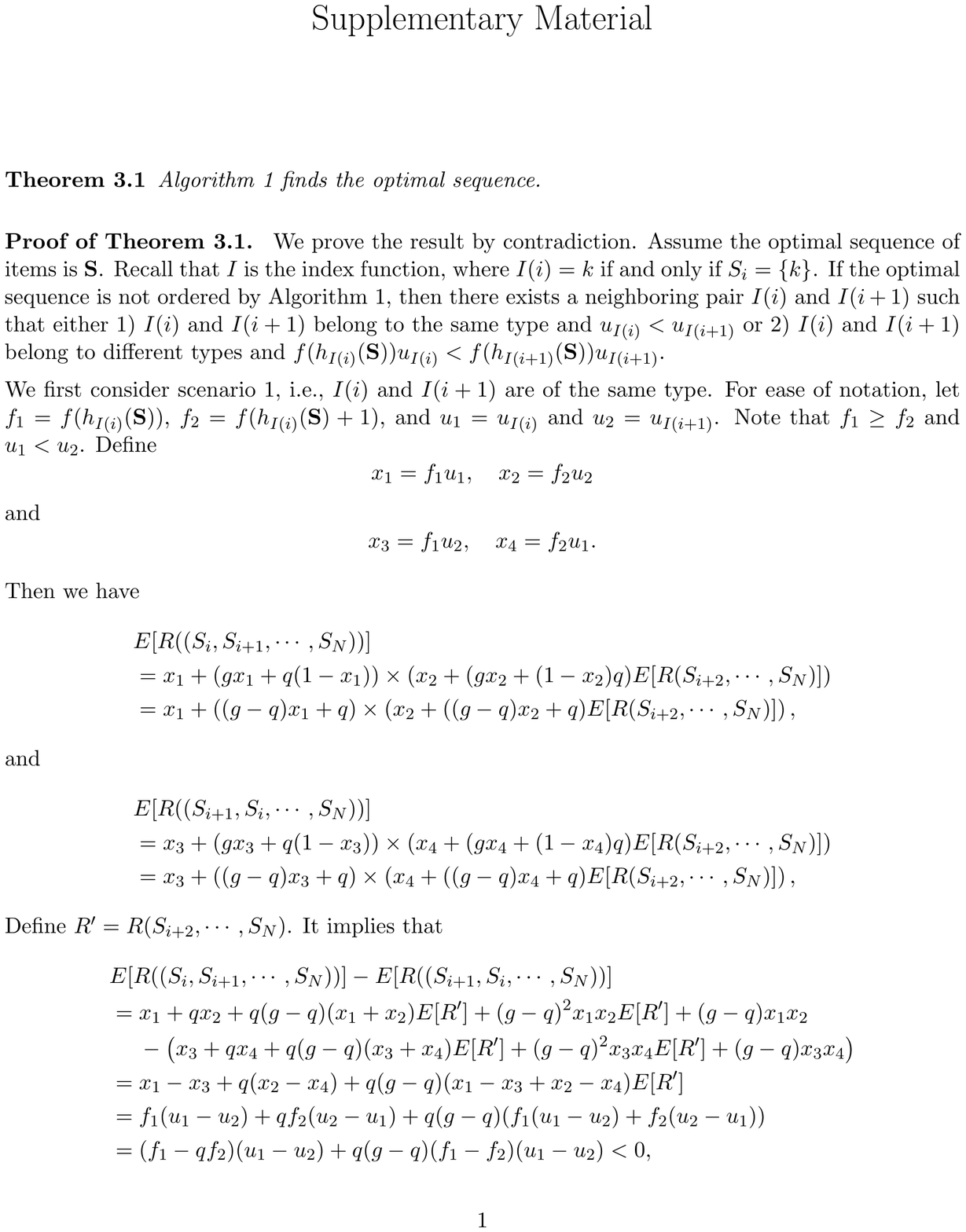}





\end{document}